\newcommand{\StepWithComment}[2]{%
  \State #1 \hfill \ding{228}\,\parbox[t]{.5\linewidth}{#2}%
}
\newcommand\blfootnote[1]{%
  \begingroup
  \renewcommand\thefootnote{}\protected@xdef\@thefnmark{}%
  \let\@makefntext\@empty 
  \footnotetext{#1}%
  \addtocounter{footnote}{0}%
  \endgroup
}
\newcommand{\combrl}{\textbf{COMBRL}\xspace}
\newtheorem*{rep@theorem}{\rep@title}
\newcommand{\newreptheorem}[2]{%
\newenvironment{rep#1}[1]{%
 \def\rep@title{#2 \ref{##1}}%
 \begin{rep@theorem}}%
 {\end{rep@theorem}}}
\newtheorem{theorem}{Theorem}
\newtheorem{lemma}[theorem]{Lemma}
\newtheorem{definition}{Definition}
\newtheorem{assumption}{Assumption}
\newcommand{\likelihood}[2][]{%
    \ifthenelse{\isempty{#1}}
        {p\left(#2\right)}
        {p_{#1}\left(#2\right)}%
}
\newcommand{\normal}[3][]{%
    \ifthenelse{\isempty{#1}}
        {\mathcal{N}\left(#2, #3\right)}
        {\mathcal{N}\left(#1|#2, #3\right)}%
}
\newcommand{\defeq}{\overset{\text{def}}{=}}
\newcolumntype{R}[2]{%
  >{\adjustbox{angle=#1,lap=\width-(#2)}\bgroup}%
  l%
  <{\egroup}%
}
\newcommand*\rot{\multicolumn{1}{R{45}{1.5em}}}
\newcommand{\rottext}[1]{\adjustbox{angle=45,lap=1em}{\shortstack{#1}}}
\newcommand{\norm}[1]{\left\lVert#1\right\rVert}
\newcommand{\abs}[1]{\left\lvert#1\right\rvert}
\newcommand{\set}[1]{\left\{#1\right\}}
\def\complexity{{\mathcal{I}}}
\newcommand{\Rzero}{\R_{\geq 0}}
\def\setD{{\mathcal{D}}}
\def\setF{{\mathcal{F}}}
\def\setH{{\mathcal{H}}}
\def\setI{{\mathcal{I}}}
\def\setM{{\mathcal{M}}}
\def\setO{{\mathcal{O}}}
\def\setU{{\mathcal{U}}}
\def\setX{{\mathcal{X}}}
\def\setZ{{\mathcal{Z}}}
\def\eqref#1{equation~\ref{#1}}
\def\1{\bm{1}}
\def\eps{{\epsilon}}
\def\vmu{{\bm{\mu}}}
\def\vf{{\bm{f}}}
\def\vk{{\bm{k}}}
\def\vu{{\bm{u}}}
\def\vx{{\bm{x}}}
\def\vy{{\bm{y}}}
\def\vz{{\bm{z}}}
\def\vpi{{\bm{\pi}}}
\def\vmu{{\bm{\mu}}}
\def\vsigma{{\bm{\sigma}}}
\def\valpha{{\bm{\alpha}}}
\def\veps{{\bm{\eps}}}
\def\mI{{\bm{I}}}
\def\mK{{\bm{K}}}
\def\mPhi{{\bm{\Phi}}}
\DeclareMathAlphabet{\mathsfit}{\encodingdefault}{\sfdefault}{m}{sl}
\SetMathAlphabet{\mathsfit}{bold}{\encodingdefault}{\sfdefault}{bx}{n}
\newcommand{\E}{\mathbb{E}}
\newcommand{\R}{\mathbb{R}}
\DeclareMathOperator*{\argmax}{arg\,max}
\DeclareMathOperator*{\argmin}{arg\,min}
\title{Sample-efficient and Scalable Exploration in Continuous-Time RL}
\author{Klemens Iten, Lenart Treven, Bhavya Sukhija, Florian D\"orfler, Andreas Krause\\ 
  ETH Z\"urich, Z\"urich, Switzerland \\
  \texttt{\{kiten, trevenl, sukhijab, dorfler, krausea\}@ethz.ch} \\
}
\begin{document}

\maketitle
\blfootnote{Project website with open source implementation: \url{https://go.klem.nz/combrl}}

\begin{abstract}
\looseness=-1
Reinforcement learning algorithms are typically designed for discrete-time dynamics, even though the underlying real-world control systems are often continuous in time. In this paper, we study the problem of continuous-time reinforcement learning, where the unknown system dynamics are represented using nonlinear ordinary differential equations (ODEs). We leverage probabilistic models, such as Gaussian processes and Bayesian neural networks, to learn an uncertainty-aware model of the underlying ODE. Our algorithm, COMBRL, greedily maximizes a weighted sum of the extrinsic reward and model epistemic uncertainty. This yields a scalable and sample-efficient approach to continuous-time model-based RL. We show that COMBRL achieves sublinear regret in the reward-driven setting, and in the unsupervised RL setting (i.e., without extrinsic rewards), we provide a sample complexity bound. In our experiments, we evaluate COMBRL in both standard and unsupervised RL settings and demonstrate that it scales better, is more sample-efficient than prior methods, and outperforms baselines across several deep RL tasks.
\end{abstract}

\section{Introduction}
\label{sec:introduction}
Reinforcement learning (RL) has proven to be a flexible paradigm for learning control policies through interaction, with success stories across robotics \citep{levine_end-to-end_2016, hwangbo_learning_2019, spiridonov_spacehopper_2024}, games~\citep{schrittwieser_mastering_2020, hafner_mastering_2024}, and applications in medicine and energy~\citep{yu_healthcare_2021, degrave_nuclear_2022}. In most RL algorithms, time is discretized: agents select actions at fixed intervals and the system evolves in discrete steps. However, many real-world control systems, from physical robots to biological processes, are naturally modeled by continuous-time dynamics governed by ordinary differential equations (ODEs). Discretization can obscure key temporal behaviours and limit control flexibility, whereas continuous-time models align better with real-world sensing and actuation.

\looseness=-1
In this work, we study \emph{continuous-time model-based reinforcement learning}, where the goal is to interact with an unknown dynamical system and use the data collected on the system to learn the underlying ODE. 
We address two settings: (\emph{i}) reward-driven RL, where the goal is to solve a specific task; and (\emph{ii}) unsupervised RL, where the objective is to learn the system dynamics globally. The latter requires accurate global modeling, while the former demands task-relevant accuracy.

We propose \combrl, a continuous-time model-based RL algorithm that uses probabilistic models, e.g., Gaussian processes and Bayesian neural networks, to capture the epistemic uncertainty of the learned model. Policies are selected by maximizing a weighted sum of extrinsic reward and epistemic uncertainty, following the optimism-in-the-face-of-uncertainty principle~\citep{curi_efficient_2020, kakade_information_2020, treven_efficient_2023, sukhija_optimism_2025}.
In the unsupervised setting often encountered in system identification~\citep{astrom_system_1971} and in unsupervised exploration \citep{aubret2019survey, sekar_planning_2020, sukhija_optimistic_2023}, \combrl reduces to active learning~\citep{taylor_active_2021}, specifically uncertainty sampling~\citep{lewis1994heterogeneous} and guides the agent toward regions where the model is most uncertain.

We show that \combrl achieves sublinear regret in the reward-driven case and provide sample complexity bounds in the unsupervised setting. Empirically, we evaluate \combrl on several deep continuous-time RL tasks, demonstrating improved performance and sample efficiency compared to baselines.

Unlike discrete-time RL, where interaction occurs on a fixed grid and uncertainty is evaluated only at visited steps, continuous-time reinforcement learning requires the agent to additionally decide when to observe and control the system. As a result, uncertainty accumulates along entire trajectories, and regret depends not only on the policy but also on the measurement schedule. Prior continuous-time approaches typically enforce optimism by optimizing jointly over policies and plausible dynamics, which is computationally demanding. In contrast, \combrl realizes optimism through an intrinsic reward and directly trades off reward and epistemic uncertainty, yielding a scalable exploration mechanism whose guarantees explicitly depend on the measurement strategy.

\paragraph{Contributions}
\begin{itemize}[itemsep=.1pt, leftmargin=*]
    \item We propose \combrl, a continuous-time optimistic model-based RL algorithm that balances task performance and model exploration. Unlike prior continuous-time methods, which are either purely exploitative or rely on costly co-optimization, \combrl uses a single scalar to balance reward and epistemic uncertainty, supporting both reward-driven and unsupervised learning.

    \item We provide theoretical guarantees, showing sublinear regret in the reward-driven case and offer sample complexity bounds in the unsupervised setting, with explicit dependence on the measurement selection strategy.

    \item We demonstrate strong empirical results across continuous-time deep RL benchmarks, showing that \combrl scales better than prior methods and generalizes to unseen downstream tasks.
\end{itemize}

\section{Problem setting}
Consider an unknown continuous-time dynamical system $\vf^{*}(\vx(t), \vu(t))$ with initial state $\vx(0)=\vx_0 \in \setX\subset \R^{d_x}$ and control input $\vu: [0,\infty)\rightarrow\setU \subset \R^{d_u}$. The state at time $t$ is obtained by integrating the deterministic dynamics:
\begin{align*}
    \vx(t) = \vx_0 + \int_{0}^t\vf^*\big(\vx(s), \vu(s)\big) \,ds.
\end{align*}
\looseness=-1

Furthermore, consider a policy $\vpi: \setX\rightarrow\setU$ so that the control input follows said policy, i.e.,~$\vu(t) = \vpi(\vx(t))$. Typically in optimal control (OC, cf. \cite{luenberger_oc_1971}), we consider the associated finite-time OC problem over the policy space $\Pi$ to solve for the optimal policy such that an objective function $J(\vpi, \vf^*)$ is maximized:
\begin{equation}
\begin{aligned}
    \label{eq:setting}
    \vpi^* &\defeq \argmax_{\vpi \in \Pi} J(\vpi, \vf^*) = \argmax_{\vpi \in \Pi} \int_{0}^Tr\big(\vx(s), \vpi(\vx(s))\big) \,ds \\
    \text{s.t.}&\quad \dot{\vx}(t) = \vf^{*}\!\big(\vx(t), \vpi(\vx(t))\big), \quad \vx(0) = \vx_0, \quad
\big(\vx(t), \vu(t)\big) \in \setX \times \setU, \quad t \in [0,T].
\end{aligned}
\end{equation}

To gather information about the unknown dynamics $\vf^{*}(\vx(t), \vu(t))$, we collect data over episodes $n \in {1, \ldots, N}$. In each episode, we select a policy $\vpi_n \in \Pi$ by optimizing an objective (e.g., task reward or exploration) and deploy it for the horizon $T$.

Optimizing solely for the reward function $r(\vx(t), \vu(t))$ during learning introduces a directional bias, since $\vf^*$ is mostly explored in high-reward regions of the state-action space, resulting in limited exploration and poor generalization of the learned dynamics. However, in many practical scenarios, such as unsupervised RL or system identification, the objective is not to optimize a performance criterion, but to learn the underlying nonlinear ODE $\vf^*$ as fast and accurately as possible.  

In practice, during an episode $n$, we query $\vf^{*}(\vx(t), \vu(t))$ by taking a sequence of measurements at $m_n$ selected time points specified by a \emph{measurement selection strategy (MSS)} $S = (S_n)_{n \geq 1}$:  

\begin{definition}[Measurement selection strategy, \citet{treven_efficient_2023}\label{def:MSS}]
    A measurement selection strategy $S$ is a sequence of sets $(S_n)_{n\ge1}$, such that $S_n$ contains $m_n$ points at which we take measurements, i.e., $S_n\subset[0, T], \abs{S_n} = m_n$.\footnote{Here, the set $S_n$ may depend on observations prior to episode $n$ or is even constructed while we execute the trajectory. For ease of notation, we do not make this dependence explicit.}
\end{definition}

\pagebreak

This gives us a dataset of measurements $\setD_n \sim (\vpi_n, S_n)$ as follows:
\begin{align*}
    \setD_n &\defeq \set{(\vz_n(t_{n, i}), \dot{\vy}_n(t_{n, i})) \mid t_{n, i} \in S_n, i \in \set{1\ldots,m_n}} \qquad\text{where} \\
    \vz_n(t_{n, i}) &\defeq \bigl(\vx_n(t_{n, i}), \vpi_n(\vx_n(t_{n, i}))\bigr) \in \setZ = \setX\times\setU, \quad \dot{\vy}_n(t_{n, i}) \defeq \dot{\vx}_n(t_{n, i}) + \veps_{n, i}.
\end{align*}
\looseness-1

Since direct measurement of $\dot{\vx}$ may not be feasible, it is typically estimated via finite differences or filtering, and is subject to measurement noise $\veps_{n,i}$. Thus, at episode $n$, the collected dataset up to the current episode $\setD_{1:n-1} := \bigcup_{i=1}^{n-1}\setD_i$ informs the policy $\vpi_n$ deployed for horizon $T$.
Note that, because we defined a continuous-time setting, we are not restricted to a fixed sampling rate and \combrl allows flexible, event-driven sampling and control.

\subsection{Related work}
We briefly mention the most relevant related works below. An extended discussion of related works is provided in Appendix~\ref{sec:related-works}.

\citet{yildiz_continuous_2021} introduce a continuous-time model-based RL approach that greedily maximizes the reward integral.  \citet{treven_efficient_2023} show that greedy exploration performs suboptimally and propose leveraging optimistic dynamics to encourage exploration in uncertain regions. The resulting algorithm, OCORL, enjoys convergence guarantees under common continuity assumptions on the underlying ODE. However, the method does not scale to higher dimensional systems since it relies on a challenging optimization over the set of plausible dynamics to enforce optimism. 

Moreover, both aforementioned works rely on an external reward signal. 
When no extrinsic reward is available, or when accurate dynamics modeling is prioritized over extrinsic performance, these methods cannot be applied. On the other hand, intrinsic motivation techniques have long been used for this purpose~\citep{salge2014empowerment, bellemare2016unifying, pathak2017curiosity,  sekar_planning_2020}.
In particular, \citet{sukhija_optimistic_2023} show convergence of intrinsic exploration methods in discrete time. However, their continuous-time counterpart is much less understood.

To this end, we present \combrl, a scalable and efficient continuous-time model-based RL algorithm with a flexible optimism-driven objective that enables a natural transition between reward-driven (extrinsic) and uncertainty-driven (intrinsic) exploration. Unlike prior continuous-time methods, which are only designed for extrinsic exploration, \combrl supports both extrinsic and intrinsic settings. In addition, unlike \citet{treven_efficient_2023}, \combrl also scales to higher dimensional settings since it does not require optimizing over the set of plausible dynamics.
Our work is closely related to \citet{sukhija_optimism_2025}, who study the problem in the discrete-time setting. However, our focus on continuous-time systems requires different theoretical analysis and experimental design, as we discuss in Sections \ref{sec:combrl} and \ref{sec:experiments}.

\subsection{Performance measure}
For the supervised RL setting, a natural performance measure is given by the \emph{cumulative regret} that sums the gaps between the performance of the policy $\vpi_n$ at episode $n$ and the optimal policy $\vpi^*$ over all the episodes:
\begin{equation}
    R_N \defeq \sum_{n=1}^Nr_n \defeq \sum_{n=1}^NJ(\vpi^*, \vf^*) - J(\vpi_n, \vf^*)
    \label{eq:regret}
\end{equation}
If the cumulative regret $R_N$ is sublinear in $N$, then the average reward of the policy converges to the optimal reward, and by extension to the optimal policy $\vpi^*$.

\subsection{Assumptions} \label{ssec:ass}
In the following, we make some common assumptions that allow us to theoretically analyse the regret $R_N$ and prove a regret bound. We first make an assumption on the continuity of the underlying system and the observation noise.

\begin{assumption}[Lipschitz continuity]
    The dynamics model $\vf^*$, reward $r$, and all policies $\vpi \in \Pi$ are $L_f$, $L_r$ and $L_{\vpi}$ Lipschitz-continuous, respectively.
    \label{ass:one}
\end{assumption}
\begin{assumption}[Sub-Gaussian noise]
    We assume that the measurement noise $\epsilon_{n,i}$ is i.i.d.~$\sigma$-sub Gaussian.
    \label{ass: Lipschitz}
\end{assumption}
The Lipschitz assumption is commonly made for analysing nonlinear systems~\citep{khalil2015nonlinear} and is satisfied for many real-world applications. Furthermore, assuming $\sigma$-sub Gaussian noise~\citep{rigollet_high-dimensional_2023} is also fairly general and is common in both RL and Bayesian optimization literature~\citep{srinivas, chowdhury2017kernelized}.

In \combrl, we learn an uncertainty-aware model of the underlying dynamics.  Therefore, we obtain a mean estimate $\vmu_n(\vz)$ and quantify our epistemic uncertainty $\vsigma_n(\vz)$ about the function $\vf^*$:
\begin{definition}[Well-calibrated statistical model of $\vf^*$, \cite{rothfuss_hallucinated_2023}]
\label{definition: well-calibrated model}
    Let $\setZ \defeq \setX \times \setU$.
    An all-time well-calibrated statistical model of the function $\vf^*$ is a sequence $\set{\setM_{n}(\delta)}_{n \ge 0}$, where
    \begin{equation*}
        \setM_n(\delta) \defeq \set{\vf: \setZ \to \R^{d_x} \mid \forall \vz \in \setZ, \forall j \in \set{1, \ldots, d_x}:
        \abs{\mu_{n, j}(\vz) - f_j(\vz)} \le \beta_n(\delta) \sigma_{n, j}(\vz)},
    \end{equation*}
    if, with probability at least $1-\delta$, we have $\vf^* \in \bigcap_{n \ge 0}\setM_n(\delta)$.
    Here, $\mu_{n, j}$ and $\sigma_{n, j}$ denote the $j$-th element in the vector-valued mean and standard deviation functions $\vmu_n$ and $\vsigma_n$ respectively, and $\beta_n(\delta) \in \Rzero$ is a scalar function that depends on the confidence level $\delta \in (0, 1]$ and which is monotonically increasing in $n$.
\end{definition}
\begin{assumption}[Well-calibration]
\label{assumption: Well Calibration Assumption}
    We assume that our learned model is an all-time well-calibrated statistical model of $\vf^*$. We further assume that the standard deviation functions $(\vsigma_n(\cdot))_{n \ge 0}$ are $L_{\vsigma}$-Lipschitz continuous.
\end{assumption}
\looseness=-1
Intuitively, \cref{assumption: Well Calibration Assumption} states that we are, with high probability, able to capture the true dynamics $\vf^*$  within a confidence set spanned by our predicted mean $\vmu_n$ and epistemic uncertainty $\vsigma_n$ and in turn, \combrl learns a probabilistic model $\vf_n$ that provides both mean predictions $\vmu_n(\vz)$ and uncertainty estimates $\vsigma_n(\vz)$. For Gaussian process (GP) models, the assumption is satisfied~\cite[Lemma 3.6]{rothfuss_hallucinated_2023}, and for more general classes of models such as Bayesian neural networks (BNNs), re-calibration techniques~\citep{kuleshov2018accurate} can be used. Thus, \combrl is model-agnostic: the statistical model can be instantiated using GPs~\citep{rasmussen_gaussian_2005, deisenroth_gaussian_2015}, BNNs~\citep{mackay1992practical}, ensembles~\citep{lakshminarayanan_ensembles_2017}, or other estimators that capture epistemic uncertainty.

Lastly, we make an assumption on the regularity of the dynamics by placing them in a reproducing kernel Hilbert space (RKHS), which enforces smoothness and boundedness:
\begin{assumption}[RKHS Prior on Dynamics]
We assume that the functions $f^*_j$, $j \in \set{1, \ldots, d_\vx}$ lie in a RKHS with kernel $k$ and have a bounded norm $B$, that is
\[
\vf^* \in \setH^{d_\vx}_{k, B}, \quad \text{with} \quad\setH^{d_\vx}_{k, B} = \{\vf \mid \norm{f_j}_k \leq B, j=1, \dots, d_\vx\}.
\]
Moreover, we assume that $k(\vz, \vz) \leq \sigma_{\max}$ for all $\vx \in \setX$.
\label{ass:rkhs_func}
\end{assumption}

\section{\combrl: Continuous-time Optimistic Model-Based RL}
\label{sec:combrl}

We now present \textbf{\combrl}, a continuous-time, optimistic model-based reinforcement learning (MBRL) algorithm. \combrl proceeds in a continuous-time, episodic setting, alternating between learning a predictive model of the dynamics from data and selecting policies that trade off extrinsic reward and epistemic uncertainty. The method assumes only access to a simulator or physical system for episodic rollouts and measurements.

\subsection{Optimistic planning objective}
In each episode $n$,
we select \emph{any} $L_f$-Lipschitz model from the confidence set $\setM_{n-1}$ of the previous episode, i.e.,  $\vf_n \in \setM_{n-1} \cap \setF$, where $\setF$ is the set of $L_f$-Lipschitz functions. We then choose a policy $\vpi_n$ that maximizes a reward-augmented optimistic objective under the current model $\vf_n$:
\begin{align}
\vpi_n &= \arg\max_{\vpi \in \Pi}\;  
\int^{T}_{0} \frac{r\big(\vx'(s), \vu(s)\big) + \lambda_n \cdot\norm{\vsigma_{n-1}(\vx'(s), \vu(s))}}{1+\lambda_n}\, ds  
\label{eq:optimistic-objective} \\
\text{s.t.} &\quad \dot{\vx}'(t) = \vf_n(\vx'(t), \vu(t)), \quad \vu(t) = \vpi(\vx'(t)). \notag \\
& \quad (\vx(t), \vu(t)) \in \setZ = \setX \times \setU \subset \mathbb{R}^{d_x+d_u}, \quad t \in [0, T]. \notag
\end{align}

\pagebreak 

The key feature of \combrl is its reward-plus-uncertainty objective in continuous time, which enables a principled trade-off between exploration and exploitation. The scalar $\lambda_n$ balances reward and model uncertainty, and is treated as a tunable hyperparameter in practice. The epistemic uncertainty term $\vsigma_{n-1}(\vz)$ in \cref{eq:optimistic-objective} encourages the agent to visit poorly understood regions of the state-action space.

Earlier continuous-time optimistic methods addressed exploration by jointly optimizing over policies and dynamics  $\vf_n \in \setM_{n-1} \cap \setF$. This is intractable and heuristically addressed using a reparametrization trick from \citet{curi_efficient_2020}, which increases input dimensionality from $d_u$ to $d_u + d_x$, limiting scalability in high-dimensional settings, e.g.,~control from pixels. \combrl avoids this by selecting \emph{any} model from $\setM_{n-1} \cap \setF$; in practice, using $\vmu_n$ works well.\footnote{Even though the mean model might not lie in $\setM_{n-1} \cap \setF$. For GP dynamics, we show how to pick a model from $\setM_{n-1} \cap \setF$ in Appendix~\ref{sec:gp_dynamics}.} Unlike optimistic dynamics or classical control, \combrl balances exploration and exploitation via a single scalar $\lambda_n$ and encourages exploration of uncertain regions to improve model fidelity.
\combrl is summarized in \cref{alg:combrl}.

\begin{algorithm}[t]
\caption{\combrl: Continuous-Time Optimistic MBRL}
\label{alg:combrl}
\begin{algorithmic}[1]
    \State \textbf{Initialize:} Statistical model $\setM_0$, Simulator $\textsc{Sim}(\cdot,\cdot)$, Dataset $\setD_0 = \emptyset$, 
    measurement selection strategy $S=(S_n)_{n\ge1}$, intrinsic reward weights $(\lambda_n)_{n=1}^N$, confidence level $\delta$

    \For{episode $n=1,\ldots,N$}
        \StepWithComment{$\vpi_n \gets \textsc{OptimizePolicy}(\setM_{n-1}, \lambda_n)$}{Solve optimistic objective from \cref{eq:optimistic-objective}, subject to $L_f$-Lipschitz dynamics $\vf_n \in \setF$}

        \StepWithComment{$\setD_n \gets \textsc{Sim}(\vpi_n, S_n)$}{Collect rollout using $S_n$}

        \StepWithComment{$\setM_n \gets \textsc{UpdateModel}(\setD_{0:n}, \delta)$}{Fit mean $\vmu_n$ and uncertainty $\vsigma_n$}
    \EndFor
\end{algorithmic}
\end{algorithm}

\subsection{The internal reward weight \texorpdfstring{$\lambda_n$}{λn}}
\label{ssec:lambda}

The scalar value $\lambda_n$ in \cref{eq:optimistic-objective} determines the trade-off between maximizing extrinsic reward and exploring regions of high model uncertainty. We differentiate between three key scenarios that affect the agent’s behaviour:
\vspace{-4pt}
\begin{itemize}[itemsep=.1pt, leftmargin=*]
    \item \textbf{Greedy ($\lambda_n = 0$):} Pure exploitation with respect to the given reward function. The model is only updated passively through whatever data results from reward-seeking behaviour, as in prior continuous-time MBRL approaches~\citep{yildiz_continuous_2021}.
    \item \textbf{Balanced ($0 < \lambda_n < \infty$)}: The agent balances task reward and model uncertainty. This regime leads to goal-directed yet exploratory behaviour and is the most relevant in practice, reducing epistemic uncertainty and improving model quality over time. We offer some strategies to select $\lambda_n$ subsequently.
    \item \textbf{Unsupervised ($\lambda_n \to \infty$)}: The agent ignores the reward and acts purely to reduce uncertainty. This corresponds to an unsupervised RL or active learning setting. This is similar to some discrete-time methods that use the model disagreement or epistemic uncertainty as an intrinsic reward~\citep{pathak_selfsupervised_2019, sekar_planning_2020, sukhija_optimistic_2023}.
\end{itemize}

\paragraph{How to choose $\lambda_n$ in the balanced case?}
For the case $0 < \lambda_n < \infty$, we study several practical strategies for setting or adapting $\lambda_n$:
\vspace{-4pt}
\begin{itemize}[itemsep=.1pt, leftmargin=*]
    \item \textbf{Static (hyperparameter):} Set $\lambda_n = \lambda$ to a fixed value tuned via cross-validation or grid search. This is simple and often effective. 
    
    \item \textbf{Scheduled (annealing):} Decrease $\lambda_n$ over time, for example $\lambda_n = \lambda_0 \cdot (1-n/N)$. This encourages more exploration early in training and more exploitation as the model improves.
    
    \item \textbf{Auto-tuned:} Use an information-theoretic approach such as the auto-tuning procedure described by ~\citet{sukhija2024maxinforl}, which selects $\lambda_n$ adaptively based on maximizing mutual information gain or related criteria.
\end{itemize}

In this work, we primarily treat $\lambda_n$ as a tunable hyperparameter with scheduling. In \cref{sec:experiments}, we also study the auto-tuning approach and show its effectiveness, as well as the unsupervised RL case.
\pagebreak
\subsection{Theoretical results}
The regret of any continuous-time model-based RL algorithm depends on the hardness of learning the true dynamics $\vf^*$ and on the measurement selection strategy (MSS) $S$ from \cref{def:MSS}. Crucially, in continuous-time RL, measurements can be taken at arbitrary times, with the MSS $S$ specifying when they occur; for example, an \emph{equidistant} MSS chooses uniformly spaced times. For the underlying dynamics $\vf^*$ and MSS $S$, the model complexity is defined as:
\begin{align}
    \complexity_N(\vf^*, S) \defeq \max_{\substack{\vpi_1, \ldots, \vpi_N \\ \vpi_n \in \Pi}} \sum_{n=1}^N\int_0^T\norm{\vsigma_{n-1}\left(\vz_n(t)\right)}^2 \,dt.
    \label{eq:model complexity}
\end{align}
\looseness-1
Intuitively, for a given $N$, the more complicated the dynamics $\vf^*$, the larger the epistemic uncertainty and thereby the model complexity. \citet{curi_efficient_2020} study the model complexity for the discrete-time setting, where the integral is replaced by the sum over uncertainties. In continuous-time, the MSS $S$ proposes when we measure the system and influences how we collect data to update our model.

Unlike discrete time, where uncertainty is assessed on a fixed grid, the continuous-time model complexity integrates uncertainty along the entire trajectory. To compete with the optimal continuous-time policy, one needs increasingly dense observations along trajectories; hence $|S_n|$ must increase with $n$.  For example, \citet{treven_efficient_2023} show that for an equidistant MSS with $|S_n| = n$, the model complexity is bounded by
\begin{equation*}
    \complexity_N(\vf^*, S) = \mathcal{O}(\gamma_N + \log N),
\end{equation*}
where $\gamma_N$ is the maximum information gain of $N$ noisy observations under the chosen kernel. This growth condition is necessary because, asymptotically, recovering the full ODE requires ever denser coverage of the trajectory, while with a fixed per-episode budget the uncertainty integral cannot shrink quickly enough. With this intuition in place, we can now formalize how the regret of \combrl scales with the model complexity.

\begin{theorem}
\label{thm: one}
Under regularity assumptions (Assumptions \ref{ass:one} to~\ref{assumption: Well Calibration Assumption}),
the following holds with a probability of at least $1 - \delta$:
\[
R_N \le \mathcal{O}\left(\sqrt{\complexity_N^3(\vf^*, S)\, N}\right).
\]
\end{theorem}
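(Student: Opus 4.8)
The plan is to run the standard optimism-in-the-face-of-uncertainty argument, but with the \emph{uncertainty bonus}, rather than a hallucinated/optimistically-chosen model, supplying the optimistic correction, and then to convert per-episode overestimates into integrated epistemic uncertainty along the executed trajectories. First I condition on the event $\mathcal{E}=\{\vf^*\in\bigcap_{n\ge0}\setM_n(\delta)\}$, which holds with probability at least $1-\delta$ by \cref{assumption: Well Calibration Assumption}; everything below is on $\mathcal{E}$. Write $J(\vpi,\vf)$ for the reward integral along the $\vf$-trajectory driven by $\vpi$, and $\Sigma_n(\vpi,\vf)=\int_0^T\norm{\vsigma_{n-1}(\vx(t),\vpi(\vx(t)))}\,dt$ for the integrated uncertainty along that same trajectory. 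Multiplying the objective in \cref{eq:optimistic-objective} by $1+\lambda_n$, optimality of $\vpi_n$ for the model $\vf_n\in\setM_{n-1}\cap\setF$ gives $J(\vpi_n,\vf_n)+\lambda_n\Sigma_n(\vpi_n,\vf_n)\ge J(\vpi^*,\vf_n)+\lambda_n\Sigma_n(\vpi^*,\vf_n)$. Since $\vf_n,\vf^*\in\setM_{n-1}$, a Gr\"onwall estimate (using \cref{ass:one}: Lipschitzness of $\vf^*$ and of the policies) bounds the gap between the $\vf_n$- and $\vf^*$-trajectories under $\vpi^*$ by a constant times $\beta_{n-1}\Sigma_n(\vpi^*,\vf_n)$, hence $J(\vpi^*,\vf_n)\ge J(\vpi^*,\vf^*)-c_1\beta_{n-1}\Sigma_n(\vpi^*,\vf_n)$ by Lipschitzness of $r$. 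The key point is that the bonus term $\lambda_n\Sigma_n(\vpi^*,\vf_n)$ absorbs this error as long as $\lambda_n\ge c_1\beta_{n-1}$, so $J(\vpi_n,\vf_n)+\lambda_n\Sigma_n(\vpi_n,\vf_n)\ge J(\vpi^*,\vf^*)$, i.e.\ the augmented value at $\vpi_n$ is genuinely optimistic. This is why I would let $\lambda_n$ scale with $\beta_{n-1}$ in the analysis (a mild requirement, since $\beta_{n-1}$ grows only polylogarithmically for benign kernels), even though $\lambda_n$ is a tuned constant in practice.

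Next I would bound $r_n=J(\vpi^*,\vf^*)-J(\vpi_n,\vf^*)\le\big(J(\vpi_n,\vf_n)-J(\vpi_n,\vf^*)\big)+\lambda_n\Sigma_n(\vpi_n,\vf_n)$ entirely in terms of the uncertainty integral $\Sigma_n(\vpi_n,\vf^*)$ along the \emph{executed} trajectory. For the first bracket, the same Gr\"onwall-plus-Lipschitz estimate (now with $\vf_n,\vf^*\in\setM_{n-1}$ along $\vpi_n$) gives $\abs{J(\vpi_n,\vf_n)-J(\vpi_n,\vf^*)}\le c_1\beta_{n-1}\Sigma_n(\vpi_n,\vf^*)$. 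For the second term I would use that $\vsigma_{n-1}$ is $L_{\vsigma}$-Lipschitz (\cref{assumption: Well Calibration Assumption}) together with the same trajectory-deviation bound to obtain $\Sigma_n(\vpi_n,\vf_n)\le(1+c_2\beta_{n-1})\Sigma_n(\vpi_n,\vf^*)$. Combining these with $\lambda_n=\Theta(\beta_{n-1})$ yields $r_n\le\mathcal{O}(\beta_{n-1}^2)\,\Sigma_n(\vpi_n,\vf^*)$.

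Finally, summing over episodes and using monotonicity of $\beta_n$, $R_N\le\mathcal{O}(\beta_N^2)\sum_{n=1}^N\Sigma_n(\vpi_n,\vf^*)$. Two applications of Cauchy--Schwarz finish the argument: within each episode $\Sigma_n(\vpi_n,\vf^*)^2\le T\int_0^T\norm{\vsigma_{n-1}(\vz_n(t))}^2\,dt$, and across episodes $\sum_n\Sigma_n(\vpi_n,\vf^*)\le\sqrt{N\sum_n\Sigma_n(\vpi_n,\vf^*)^2}$; since the executed policy sequence is feasible in the maximization defining $\complexity_N$, we get $\sum_n\int_0^T\norm{\vsigma_{n-1}(\vz_n(t))}^2\,dt\le\complexity_N(\vf^*,S)$, hence $\sum_n\Sigma_n(\vpi_n,\vf^*)\le\sqrt{NT\,\complexity_N}$ and $R_N\le\mathcal{O}\big(\beta_N^2\sqrt{\complexity_N N}\big)$. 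It remains to control $\beta_N$: under \cref{ass: Lipschitz} and \cref{ass:rkhs_func}, the standard self-normalized/GP confidence bound controls $\beta_N$ by the maximum information gain, and the same information-gain calculus that gives $\complexity_N=\mathcal{O}(\gamma_N+\log N)$ for the equidistant MSS shows $\beta_N^2=\tilde{\mathcal{O}}(\complexity_N)$; substituting gives $R_N\le\mathcal{O}(\sqrt{\complexity_N^3 N})$, the cube on $\complexity_N$ coming precisely from the two $\beta_{n-1}$ factors (one from the reward comparison, one from comparing the nominal and true trajectories' uncertainty integrals, with $\lambda_n$ contributing the second).

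The main obstacle is the trajectory bookkeeping in the middle step: because $\vf_n$ is an arbitrary element of $\setM_{n-1}\cap\setF$ (not optimistically chosen, and in practice $\vmu_n$, which need not even lie in the set), one must repeatedly compare the nominal trajectory driven by $\vf_n$ with the true trajectory driven by $\vf^*$, under both $\vpi_n$ and $\vpi^*$, and make every resulting error term collapse onto the single integrated-uncertainty quantity $\Sigma_n(\vpi_n,\vf^*)$ that $\complexity_N$ is built to bound; the Gr\"onwall constant $e^{L_f(1+L_{\vpi})T}$ and the $L_{\vsigma}$-Lipschitz step are exactly what make this possible, and threading the $\lambda_n$/$\beta_n$ dependence through correctly is the delicate part. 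A secondary point, needed for the bound to be meaningful, is exhibiting a valid model in $\setM_{n-1}\cap\setF$ in the first place (for GP dynamics this is deferred to \cref{sec:gp_dynamics}).
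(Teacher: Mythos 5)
Your proposal is correct and follows essentially the same route as the paper: the bonus-absorbs-the-model-error optimism step, the simple-regret bound $r_n \le \mathcal{O}(\lambda_n^2)\,\Sigma_n(\vpi_n,\vf^*)$ with $\lambda_n = \Theta(\beta_{n-1})$ (the paper's $(\lambda_n^2+2\lambda_n)$ factor, obtained via the same Gr\"onwall/Lipschitz comparison of the $\vf_n$- and $\vf^*$-trajectories for both $r$ and $\vsigma_{n-1}$), followed by two applications of Cauchy--Schwarz against $\complexity_N$ and the identification $\beta_N^2 = \tilde{\mathcal{O}}(\complexity_N)$ to produce the cube. Even your closing remark about exhibiting a concrete element of $\setM_{n-1}\cap\setF$ matches the paper, which handles this via the constrained quadratic program in \cref{sec:gp_dynamics}.
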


\cref{thm: one} bounds the regret of \combrl w.r.t.~the model complexity. Importantly, this shows that the learning performance depends also on how efficiently the MSS collects informative measurements.

For GP dynamics, where the well-calibration assumption is true and the monotonicity of the variance holds, \citet{treven_efficient_2023} show that the model complexity $\complexity_N(\vf^*, S)$ is sublinear for many common kernels and MSSs, e.g., grows with $\operatorname{poly} \log(N)$ for the RBF kernel and equidistant MSS. Therefore, for common kernels and MSSs, \combrl enjoys sublinear regret in the GP setting, and the policy converges to the optimal policy $\vpi^*$.

For the unsupervised setting without an external reward signal, the regret term as defined in \cref{eq:regret} does not apply, and instead we analyze the sample complexity of reducing epistemic uncertainty.

\begin{theorem}
\label{thm: two} 
Consider the unsupervised setting ($\lambda_n \to \infty$) and let Assumptions~\ref{ass:one} to~\ref{assumption: Well Calibration Assumption} hold.\\If $\sigma_{n-1, j}(\vz) \geq \sigma_{n, j}(\vz)$ $\forall$ $\vz \in \setZ$, $j \leq d_x$, and $n > 0$, then
\[
\max_{\vpi \in \Pi} \int_0^T \norm{\vsigma_{n-1}\big(\vx(s), \vpi(\vx(s))\big)} \, ds
\le \mathcal{O}\left(\sqrt{\frac{\complexity_N^3(\vf^*, S)}{N}}\right).
\]
\end{theorem}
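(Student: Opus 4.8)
The plan is to reduce the unsupervised bound to the same machinery that underlies Theorem~\ref{thm: one}, exploiting that in the limit $\lambda_n \to \infty$ the optimistic objective in \cref{eq:optimistic-objective} reduces to maximizing the uncertainty integral $\int_0^T \norm{\vsigma_{n-1}(\vx'(s),\vpi(\vx'(s)))}\,ds$ along the trajectory generated by the chosen model $\vf_n \in \setM_{n-1}\cap\setF$. Let $\vpi_n$ denote the policy selected by \combrl in episode $n$ under this limit. The first step is to show that the quantity $U_n := \max_{\vpi\in\Pi}\int_0^T \norm{\vsigma_{n-1}(\vx(s),\vpi(\vx(s)))}\,ds$ — where here $\vx(\cdot)$ is rolled out under the \emph{true} dynamics $\vf^*$ — is, up to constants and the usual Gr\"onwall-type discretization terms, upper bounded by the uncertainty the algorithm actually collects in episode $n$. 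Concretely, for any competitor policy $\vpi$, one bounds the difference between its true trajectory and its $\vf_n$-trajectory using $\vf^*\in\setM_{n-1}$ (so pointwise $\norm{\vf^*(\vz)-\vf_n(\vz)} \le 2\sqrt{d_x}\,\beta_{n-1}(\delta)\norm{\vsigma_{n-1}(\vz)}$), Lipschitz continuity of $\vf^*$, $\vpi$ and $\vsigma_{n-1}$ (Assumptions~\ref{ass:one},~\ref{assumption: Well Calibration Assumption}), and Gr\"onwall's inequality, exactly as in the regret proof. Since $\vpi_n$ is the maximizer of the $\vf_n$-rollout uncertainty integral, this gives $U_n \le c\int_0^T \norm{\vsigma_{n-1}(\vz_n(t))}\,dt$ for a constant $c$ depending on $\beta_{n-1}(\delta)$, $T$, and the Lipschitz constants, where $\vz_n(t)$ is the realized state-action trajectory of episode $n$.

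The second step converts the per-episode bound into an aggregate statement by averaging over $n=1,\dots,N$. By monotonicity of the standard deviations ($\sigma_{n-1,j}\ge\sigma_{n,j}$, the hypothesis of the theorem), the sequence $U_n$ is non-increasing in $n$, so $U_N \le \frac1N\sum_{n=1}^N U_n$. Applying the per-episode bound termwise and then Cauchy–Schwarz over the $N$ episodes and over time,
\[
\sum_{n=1}^N \int_0^T \norm{\vsigma_{n-1}(\vz_n(t))}\,dt
\;\le\; \sqrt{NT}\,\sqrt{\sum_{n=1}^N\int_0^T\norm{\vsigma_{n-1}(\vz_n(t))}^2\,dt}
\;\le\; \sqrt{NT}\,\sqrt{\complexity_N(\vf^*,S)},
\]
where the last inequality is the definition of $\complexity_N(\vf^*,S)$. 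Hence $U_N \le \frac{c\sqrt{T}}{\sqrt{N}}\sqrt{\complexity_N(\vf^*,S)}$. This already looks like the claimed rate up to the exponent on $\complexity_N$; the remaining cubic power arises because the constant $c$ hides a factor $\beta_{N-1}(\delta)$, and $\beta_{N-1}(\delta)$ itself grows like $\sqrt{\complexity_N(\vf^*,S)}$ (this is the standard RKHS/information-gain bound, cf. Assumption~\ref{ass:rkhs_func} and the maximum information gain $\gamma_N$ appearing in the discussion preceding Theorem~\ref{thm: one}). Substituting $\beta_{N-1}(\delta) = \mathcal{O}(\sqrt{\complexity_N})$ and folding the Lipschitz/$T$ dependence into the $\mathcal{O}(\cdot)$ yields $U_N \le \mathcal{O}\bigl(\sqrt{\complexity_N^3(\vf^*,S)/N}\bigr)$, as claimed.

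The main obstacle I anticipate is the first step: making rigorous the claim that the competitor policy's \emph{true}-dynamics uncertainty integral is controlled by the algorithm's \emph{model}-dynamics uncertainty integral. This requires carefully chaining (i) the trajectory-deviation bound between the $\vf^*$-rollout and the $\vf_n$-rollout of a fixed policy, which needs Gr\"onwall plus the calibration gap, and (ii) the Lipschitz continuity of $\vsigma_{n-1}$ to transfer the uncertainty evaluated along one trajectory to the uncertainty along the nearby trajectory, all while keeping the blow-up constants independent of $n$ except through $\beta_{n-1}(\delta)$. This is essentially the same technical core as Theorem~\ref{thm: one}, so it should be reusable, but one must be careful that in the unsupervised limit the "reward" contribution vanishes cleanly and does not leave residual terms, and that the maximization over $\Pi$ on the left-hand side is handled by the optimality of $\vpi_n$ for the surrogate ($\vf_n$-based) objective rather than the true one.
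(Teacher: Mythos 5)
Your proposal follows essentially the same route as the paper's proof: monotonicity of the variance to replace the episode-$N$ quantity by the average over episodes, a Gr\"onwall/calibration transfer (the paper's Lemma~\ref{lemma: Main Lemma sub Gaussian}) applied in \emph{both} directions around the optimality of $\vpi_n$ for the $\vf_n$-based surrogate, then Cauchy--Schwarz against the definition of $\complexity_N(\vf^*,S)$. The only slip is in the final power count: the transfer lemma is invoked twice (competitor policy from $\vf^*$ to $\vf_n$, then the executed policy from $\vf_n$ back to the realized $\vf^*$-trajectory), so your constant $c$ carries $(1+\beta_{n-1})^2 = \mathcal{O}(\complexity_N)$ rather than a single factor $\beta_{n-1} = \mathcal{O}(\sqrt{\complexity_N})$; with only one factor your chain would yield $\sqrt{\complexity_N^2/N}$, whereas the two factors are exactly what produce the claimed $\sqrt{\complexity_N^3/N}$.
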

\cref{thm: two} provides a sample complexity bound for the unsupervised case. Effectively, this shows that pure intrinsic exploration with $\lambda_n \to \infty$ reduces our model epistemic uncertainty with a rate of $\sqrt{{\setI^{3}_{N}}/{N}}$. To the best of our knowledge, we are the first to show this for continuous-time RL.

We provide the proofs for all our theoretical results in Appendix~\ref{app:proofs}.
\pagebreak
\section{Experiments}
\label{sec:experiments}
We evaluate the \combrl algorithm on various environments from the OpenAI/Farama gymnasium benchmark suite ~\citep[Gym,][]{brockman_gym_2016, farama_gymnasium_2024} and the DeepMind control suite~\citep[DMC,][]{tassa_deepmind_2018, google_dmc_2020}. 
For the dynamics model $\vf_n$, we use Gaussian processes (GPs) and probabilistic ensembles (PEs) to capture uncertainty about well-calibrated statistical models. Since \combrl\ models continuous-time dynamics directly, it also remains agnostic to the solver or discretization scheme, and can in principle accommodate adaptive strategies. 

Earlier work has shown that continuous-time formulations can outperform discrete-time counterparts in sample efficiency. Building on this, our experiments are designed to show both \emph{(i)} the core behavior of \combrl under standard equidistant sampling and fixed-rate control, and \emph{(ii)} that in principle \combrl can also be used in the time-adaptive setting, where it requires fewer interactions.  

Concretely, we first evaluate \combrl with equidistant MSS and fixed control rates, solving the continuous-time planning problem using discrete-time solvers with fine-grained discretization to ensure accurate approximation of the underlying ODE. Unlike discrete-time RL, however, $\vf_n$ learns the ODE of the true system $\vf^*$. We then extend our evaluation to the time-adaptive setting, comparing adaptive MSS against fixed-rate control.  

We use $\vf_n$ to generate simulated trajectories for policy training using soft actor-critic~\citep[SAC,][]{haarnoja_sac_2018} for closed-loop control and the improved cross-entropy method~\citep[iCEM,][]{pinneri_icem_2020} for real-time trajectory optimization. For the time-adaptive control experiments, we additionally employ the time-adaptive control \& sensing (TaCoS) framework of \citet{treven_wtssc_2024} as an alternative planner.

Brief descriptions of SAC and iCEM as well as additional implementation details are provided in Appendix~\ref{app:gp} and \ref{app:bnn_exp}, while further details on the TaCoS framework are given in Appendix~\ref{app:tacos}.

\paragraph{Baselines}
We compare \combrl with two baselines with different planning approaches. The mean planner uses the mean estimate $\vmu_n$ of the statistical model and greedily maximizes the extrinsic reward for the task at hand (i.e.,~$\lambda_n=0$), akin to~\cite{yildiz_continuous_2021}.\\Furthermore, we compare our method with the trajectory sampling scheme (TS-1) proposed by \cite{chua_deep_2018}, subsequently referred to as PETS. PETS samples trajectories from PEs for state propagation and thus inherently captures the epistemic uncertainty during planning.

\begin{figure}
    \centering
    \includegraphics[width=0.9\linewidth]{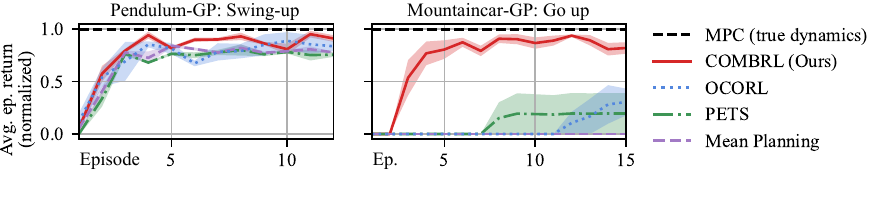}
    \caption{\emph{GP dynamics.} Learning curves for baselines, \combrl and OCORL with fixed internal reward weight $\lambda_n$ using GP dynamics and iCEM planning, averaged over 5 seeds. We report the mean and the standard error bands, and additionally the performance of MPC under the true dynamics as an estimate for the performance of the optimal policy. \combrl achieves higher asymptotic returns than PETS and mean, while matching or exceeding OCORL at roughly $3\times$ lower computational cost (Appendix~\ref{app:compute times}).}
    \label{fig:gps}
\end{figure}
\paragraph{Does \combrl scale better than the state of the art?} 
In \cref{fig:gps}, we empirically validate our theoretical insights using GPs. We consider two classic continuous control tasks -- the pendulum swing-up and the mountaincar problem~\citep{moore_efficient_1990} -- and use iCEM for real-time planning. We compare \combrl with the baselines above and with the state-of-the-art continuous-time RL algorithm OCORL from \cite{treven_efficient_2023}. In these experiments, $\lambda_n$ is held constant throughout training and treated as a static, hand-tuned hyperparameter. 

After each episode, we evaluate the learned model by computing the return using its mean prediction $\vmu_n$ on the original task. As a lower bound for the performance of the optimal policy $\vpi^*$, we report the best performance achieved by MPC (iCEM) on the true system when the dynamical system $\vf^*$ is known. Thus, the MPC can plan with respect to the task/reward at hand subject to the true dynamics. 

Results, averaged over five random seeds with standard error bands, show that \combrl consistently achieves higher asymptotic returns than the baselines, confirming the benefits of intrinsic rewards for guiding exploration. For the Pendulum experiments, all algorithms solve the task of swinging up. However, the co-optimization over the reward and the optimistic dynamics as well as the reparametrization trick used by the OCORL algorithm are computationally prohibitive, and require around $3\times$ the compute time compared to \combrl (see Appendix~\ref{app:compute times}). Thus our algorithm scales better and is more computationally efficient than the state of the art.

\begin{figure}
    \centering
    \includegraphics[width=\linewidth]{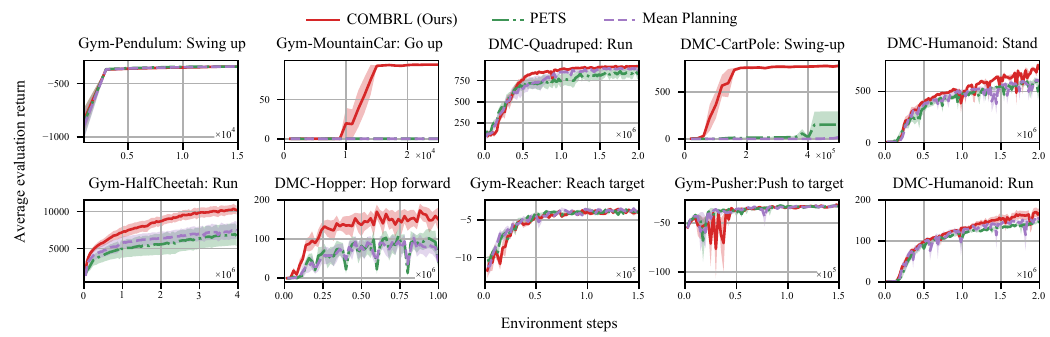}
    \caption{\emph{Effect of intrinsic rewards.} Learning curves for \combrl (with auto-tuned $\lambda_n$) and baselines on different continuous control tasks. We report the mean return when evaluating the learned model on the task at hand, averaged across 10 random seeds along with the standard error. \combrl achieves the largest performance gains in sparse or underactuated tasks, and consistent improvements in higher-dimensional domains.}
    \label{fig:maxinfo}
\end{figure}

\paragraph{How does the intrinsic reward affect learning?} 
To assess the effect of intrinsic rewards, we compare \combrl with $\lambda_n > 0$ against PETS and the mean planner. \cref{fig:maxinfo} shows learning curves on several environments from Gym and DMC. In these experiments, we model dynamics using PEs and use SAC to train the policy. We compare \combrl with a nonzero intrinsic reward weight $\lambda_n$ against PETS and the mean planner ($\lambda_n = 0$). For \combrl, the internal reward weight $\lambda_n$ is automatically tuned following the strategy proposed by \cite{sukhija2024maxinforl}. Details of the tuning procedure and the resulting evolution of $\lambda_n$ over training are provided in Appendix~\ref{ss:maxinfo}. We further show in Appendix~\ref{app:epistemic_decrease} that this auto-tuned intrinsic reward drives exploration towards epistemically uncertain regions of the state–action space.

Across a range of environments, \combrl achieves higher asymptotic returns, particularly in sparse-reward or underactuated settings such as MountainCar and CartPole, highlighting that optimism-driven exploration significantly accelerates learning. In higher-dimensional environments such as HalfCheetah, Hopper, and Humanoid, COMBRL improves performance by encouraging exploration of uncertain regions, which helps uncover effective behaviors in complex, high-degree-of-freedom systems, even when these behaviors are not directly tied to high immediate reward signals.

\paragraph{How does \combrl perform in the unsupervised RL setting?} We evaluate whether exploration driven by model uncertainty improves generalization to unseen tasks. Specifically, we train policies on a primary task (e.g., reaching a target) and assess zero-shot performance on a downstream task not encountered during training (e.g., moving away from the target). The downstream tasks are implemented by modifying the reward per environment and are summarized in Appendix~\ref{app:downstream_tasks}.

\Cref{fig:downstream} compares standard \combrl as well as its unsupervised variant ($\lambda_n \to \infty$, cf.~\cref{ssec:lambda}) to the baselines. To ensure a fair comparison, we let each agent explore the environment for several episodes, and then periodically evaluate the learned model on downstream tasks. While \combrl generally achieves the best performance on the primary task especially in environments which offer sparse rewards, its unsupervised variant performs best on the downstream task across all seven evaluated domains. 

This suggests that exploration guided by model uncertainty encourages the agent to cover a more diverse state-action space and highlights the trade-off between task focus and exploration breadth: Ignoring the reward signal entirely leads the agent to explore broadly, acquiring a globally accurate model that generalizes better to unseen tasks for $\lambda_n\rightarrow\infty$.

\begin{figure}
    \centering
    \includegraphics[width=.94\linewidth]{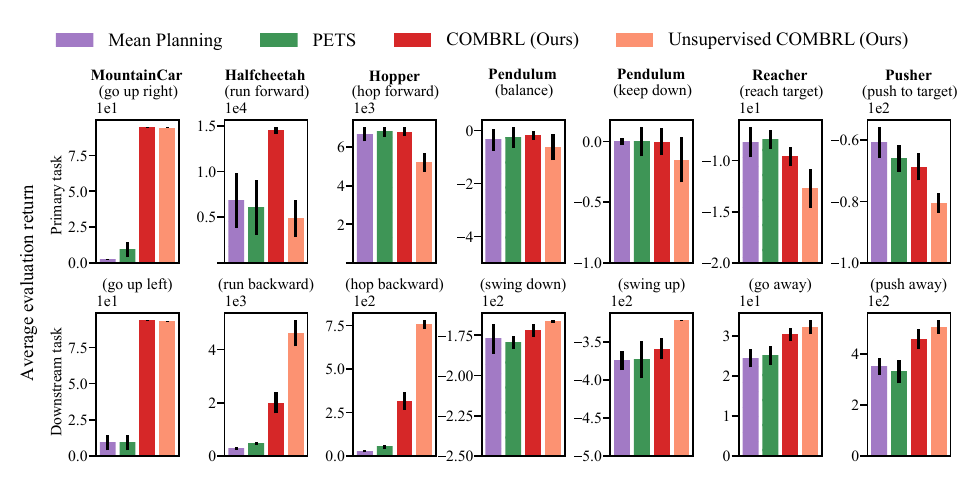}
    \caption{\emph{Generalization to downstream tasks.} Final returns at convergence on primary (trained) and downstream (unseen) tasks across seven Gym environments. We report the mean return as well as the standard error for the primary and downstream task. For \combrl, we differentiate between the balanced case with a static or annealing schedule for $\lambda_n$, or the unsupervised case with $\lambda_n\rightarrow\infty$.}
    \label{fig:downstream}
\end{figure}

\pagebreak
\paragraph{How does the choice of $\lambda_n$ trade off directed exploration w.r.t.~the extrinsic reward and global exploration?} 
We ablate different values of the internal reward weight $\lambda_n$ for the Gym implementation of the HalfCheetah~\citep{wawrzynski_halfcheetah_2009}, Hopper~\citep{erez_infinite_2012}, as well as the Reacher and Pusher, which are part of the MuJoCo environments~\citep{todorov_mujoco_2012}.  \cref{fig:ablation} shows that growing nonzero $\lambda_n$ values improve downstream generalization while maintaining strong performance on the primary task. In contrast, large $\lambda_n$ may overly favor exploration and degrade performance on the primary task. This suggests that there exists an intermediate value of $\lambda_n$ that balances goal-directed behaviour with model uncertainty reduction, achieving the best of both \combrl (task-optimal) and its unsupervised variant (exploration-optimal).

\begin{figure}
    \centering
    \includegraphics[width=0.99\linewidth]{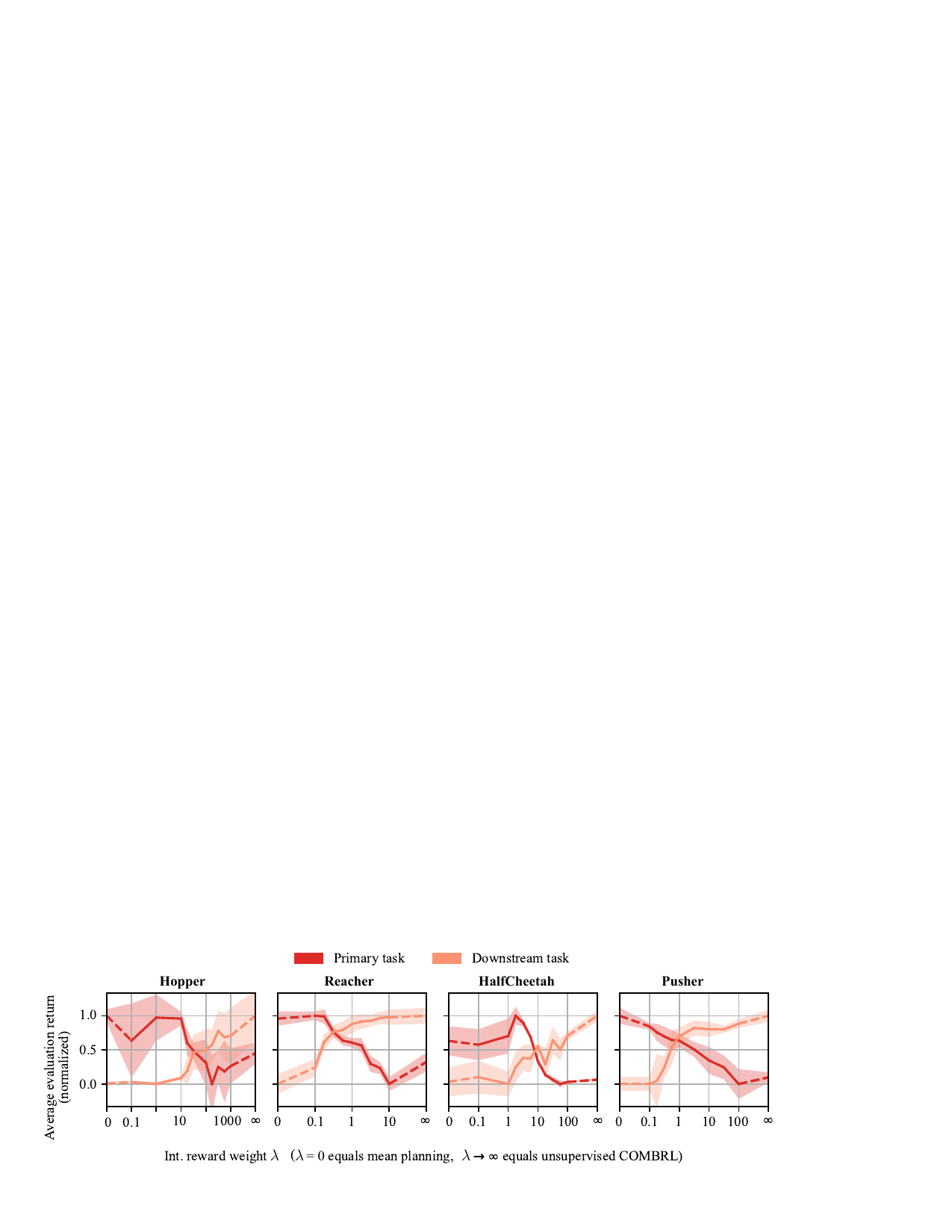}
    \caption{\emph{Ablating the internal reward weight $\lambda_n$.} Final performance at convergence for different environments and tasks with varying $\lambda$. We ablate over different choices of $\lambda$ and report the mean return and standard error on a primary task which the proposed algorithm was trained on, as well as a previously unseen downstream task.} 
    \label{fig:ablation}
\end{figure}

\paragraph{How does \combrl perform in the time-adaptive setting?} A key advantage of continuous-time RL over its discrete-time counterpart is that it enables us to deploy policies at varying control frequencies. To illustrate this benefit, we extend our evaluation to the time-adaptive control \& sensing (TaCoS) framework from \citet{treven_wtssc_2024}, where interactions with the system incur transition costs and the agent jointly optimizes both actions and their durations, i.e., the control frequency. We offer details on the TaCoS framework in Appendix~\ref{app:tacos}.

We compare \combrl with time-adaptive MSS (\combrl-TaCoS) with OTaCoS (the variant of OCORL for TaCoS), Mean-TaCoS, and PETS-TaCoS. Furthermore, we compare these time-adaptive methods to COMBRL with a fixed control rate and equidistant MSS such as the one used in the previous experiments. Across benchmarks, \combrl-TaCoS achieves competitive or superior returns while using fewer interactions compared to the fixed control rate variant, which is shown by the higher returns in \cref{fig:tacos_bar} and in turn, demonstrates sample efficiency. Compared to the previous experiments, this shows that optimism-driven exploration extends naturally to the adaptive-control setting and that \combrl is, in principle, applicable independently of not only the dynamics model, but also the measurement selection strategy. 

\begin{figure}
    \centering
    \includegraphics[width=\linewidth]{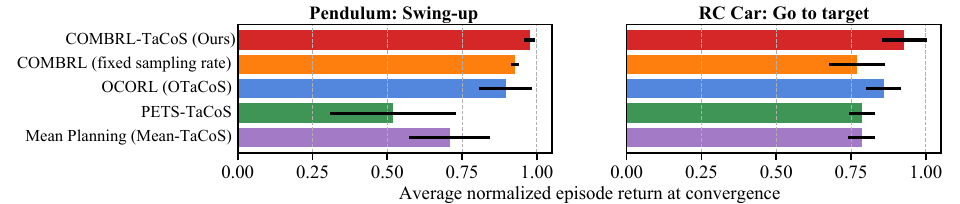}
    \caption{\emph{Adaptive TaCoS setting.} Final performance at convergence for \combrl-TaCoS compared to OTaCoS, Mean-TaCoS, PETS-TaCoS, and COMBRL with a fixed control rate (equidistant MSS). 
    Final returns at convergence are averaged over 10 random seeds and reported as mean with standard error over $10$ random seeds. 
    \combrl-TaCoS achieves competitive or superior returns while requiring fewer interactions than its fixed-rate variant, and matches or exceeds the performance of OTaCoS.}
    \label{fig:tacos_bar}
\end{figure}

\section{Conclusion}
In this work, we introduced \combrl, a continuous-time model-based reinforcement learning algorithm that uses epistemic uncertainty to guide exploration through an intrinsic reward. Our approach provides a principled and flexible mechanism to balance exploration and exploitation via the internal reward weight $\lambda_n$, generalizing the classical optimism-in-the-face-of-uncertainty paradigm to continuous-time domains in a way that is scalable, easy to implement, and agnostic to the statistical model, trajectory planner, as well as the measurement and control strategy.

Our experiments demonstrate that \combrl is strong at goal-directed learning on the task at hand, while its unsupervised RL variant (i.e., $\lambda_n \to \infty$) is particularly effective at generalizing to unseen downstream tasks. This highlights a core trade-off: Reward directed exploration for appropriate $\lambda_n$ values and exploration across the entire reachable domain for $\lambda_n \to \infty$.
We differentiate between three cases for the internal reward weight that affect the agent's behaviour -- greedy, balanced, and unsupervised. Empirical ablations confirm that there exists a range for $\lambda_n$ that enables generalizable yet sample-efficient learning. Furthermore, the unsupervised version of \combrl acts as an unsupervised system identification strategy, enabling strong zero-shot adaptation to new objectives.

\section*{LLM usage statement}
Parts of this text were revised with the assistance of a large language model to aid or polish writing and to improve grammar and clarity; the authors remain responsible for all content.

\section*{Reproducibility statement}
All algorithmic details of \combrl, including the problem formulation, regret analysis, and assumptions, are presented in the main text, with complete proofs provided in Appendix~\ref{app:proofs}. The experimental setup, including environment details, model architectures, and hyperparameters, is described in Appendix~\ref{sec:experiment_details}. The source code for the conducted experiments is available under \url{https://github.com/lasgroup/ombrl}.

\section*{Acknowledgements}
We would like to thank Bruce D.~Lee for the insightful feedback on this work. 
This project has received funding from the Swiss National Science Foundation under NCCR Automation, grant agreement 51NF40 180545. Numerical simulations were performed on the ETH Zürich Euler cluster.

\pagebreak

{
\bibliographystyle{plainnat}
\bibliography{references}

@article{chua_deep_2018,
	author = {Chua, Kurtland and Calandra, Roberto and {McAllister}, Rowan and Levine, Sergey},
	title = {Deep reinforcement learning in a handful of trials using probabilistic dynamics models},
    journal = {Conference on Neural Information Processing Systems (NeurIPS)},
    year = {2018},
	url = {http://arxiv.org/abs/1805.12114},
}

@ARTICLE{deisenroth_gaussian_2015,

  author={Deisenroth, Marc Peter and Fox, Dieter and Rasmussen, Carl Edward},

  journal={IEEE Transactions on Pattern Analysis and Machine Intelligence}, 

  title={Gaussian Processes for Data-Efficient Learning in Robotics and Control}, 

  year={2015},

  volume={37},

  number={2},

  pages={408-423},

  keywords={Computational modeling;Probabilistic logic;Approximation methods;Robots;Uncertainty;Data models;Predictive models;Policy search;robotics;control;Gaussian processes;Bayesian inference;reinforcement learning},
	url = {https://ieeexplore.ieee.org/document/6654139},


}

@misc{rigollet_high-dimensional_2023,
	title = {High-Dimensional Statistics},
	url = {http://arxiv.org/abs/2310.19244},
	doi = {10.48550/arXiv.2310.19244},
    year={2023},
	abstract = {These lecture notes were written for the course 18.657, High Dimensional Statistics at {MIT}. They build on a set of notes that was prepared at Princeton University in 2013-14 that was modified (and hopefully improved) over the years.

This course offers an introduction to the finite sample analysis of high- dimensional statistical methods. The goal is to present various proof techniques for state-of-the-art methods in regression, matrix estimation and principal component analysis ({PCA}) as well as optimality guarantees. The course ends with research questions that are currently open.

You can read more about Prof. Rigollet's work and courses [on his website](http://www-math.mit.edu/{\textasciitilde}rigollet/)},
	number = {{arXiv}:2310.19244},
	publisher = {{arXiv}},
	author = {Rigollet, Philippe and Hütter, Jan-Christian},
	urldate = {2024-03-14},
	date = {2023-10-29},
	eprinttype = {arxiv},
	eprint = {2310.19244 [math, stat]},
	keywords = {Mathematics - Statistics Theory},
}

@article{kakade_information_2020,
	author = {Kakade, Sham and Krishnamurthy, Akshay and Lowrey, Kendall and Ohnishi, Motoya and Sun, Wen},
	title = {Information Theoretic Regret Bounds for Online Nonlinear Control},
    journal = {Conference on Neural Information Processing Systems (NeurIPS)},
    year = {2020},
	url = {http://arxiv.org/abs/2006.12466},
}

@article{curi_efficient_2020,
	author = {Curi, Sebastian and Berkenkamp, Felix and Krause, Andreas},
	title = {Efficient Model-Based Reinforcement Learning through Optimistic Policy Search and Planning},
    journal = {Conference on Neural Information Processing Systems (NeurIPS)},
    year = {2020},
	url = {http://arxiv.org/abs/2006.08684},
}

@article{rothfuss_hallucinated_2023,
	title = {Hallucinated adversarial control for conservative offline policy evaluation},
	author = {Rothfuss, Jonas and Sukhija, Bhavya and Birchler, Tobias and Kassraie, Parnian and Krause, Andreas},
    journal = {Conference on Uncertainty in Artificial Intelligence ({UAI})},
    year={2023},
	url = {https://proceedings.mlr.press/v216/rothfuss23a.html},

}

@article{taylor_active_2021,
	title = {Active learning in robotics: A review of control principles},
	volume = {77},
    journal = {Mechatronics},
	url = {https://doi.org/10.1016/j.mechatronics.2021.102576},
	shorttitle = {Active learning in robotics},
	abstract = {Active learning is a decision-making process. In both abstract and physical settings, active learning demands both analysis and action. This is a review of active learning in robotics, focusing on methods amenable to the demands of embodied learning systems. Robots must be able to learn efficiently and flexibly through continuous online deployment. This poses a distinct set of control-oriented challenges—one must choose suitable measures as objectives, synthesize real-time control, and produce analyses that guarantee performance and safety with limited knowledge of the environment or robot itself. In this work, we survey the fundamental components of robotic active learning systems. We discuss classes of learning tasks that robots typically encounter, measures with which they gauge the information content of observations, and algorithms for generating action plans. Moreover, we provide a variety of examples – from environmental mapping to nonparametric shape estimation – that highlight the qualitative differences between learning tasks, information measures, and control techniques. We conclude with a discussion of control-oriented open challenges, including safety-constrained learning and distributed learning.},
	pages = {102576},
	journaltitle = {Mechatronics},
	shortjournal = {Mechatronics},
	author = {Taylor, Annalisa T. and Berrueta, Thomas A. and Murphey, Todd D.},
	urldate = {2024-03-11},
	date = {2021-08-01},
    year={2021},
	keywords = {Active learning, Artificial intelligence, Learning theory, Perception and sensing, Robot control, Robotics},
}

@article{treven_efficient_2023,
	title = {Efficient Exploration in Continuous-time Model-based Reinforcement Learning},
	url = {http://arxiv.org/abs/2310.19848},
    year = {2023},
	author = {Treven, Lenart and Hübotter, Jonas and Sukhija, Bhavya and Dörfler, Florian and Krause, Andreas},
    journal = {Conference on Neural Information Processing Systems (NeurIPS)},
}

@article{sukhija_optimistic_2023,
	title = {{Optimistic Active Exploration of Dynamical Systems}},
	author = {Sukhija, Bhavya and Treven, Lenart and Sancaktar, Cansu and Blaes, Sebastian and Coros, Stelian and Krause, Andreas},
    journal = {Conference on Neural Information Processing Systems (NeurIPS)},
    year = 2023,
	url = {http://arxiv.org/abs/2306.12371},
}

@misc{farama_gymnasium_2024,
  title={Gymnasium: A Standard Interface for Reinforcement Learning Environments},
      author={Mark Towers and Ariel Kwiatkowski and Jordan Terry and John U. Balis and Gianluca De Cola and Tristan Deleu and Manuel Goulão and Andreas Kallinteris and Markus Krimmel and Arjun KG and Rodrigo Perez-Vicente and Andrea Pierré and Sander Schulhoff and Jun Jet Tai and Hannah Tan and Omar G. Younis},
  year={2024},
      url={https://arxiv.org/abs/2407.17032}, 
}

@article{google_dmc_2020,
         title = {{dm\_control}: Software and tasks for continuous control},
         journal = {Software Impacts},
         volume = {6},
         pages = {100022},
         year = {2020},
         url = {https://www.sciencedirect.com/science/article/pii/S2665963820300099},
         author = {Saran Tunyasuvunakool and Alistair Muldal and Yotam Doron and
                   Siqi Liu and Steven Bohez and Josh Merel and Tom Erez and
                   Timothy Lillicrap and Nicolas Heess and Yuval Tassa},
}

@misc{brockman_gym_2016,
      title={{OpenAI Gym}}, 
      author={Greg Brockman and Vicki Cheung and Ludwig Pettersson and Jonas Schneider and John Schulman and Jie Tang and Wojciech Zaremba},
      year={2016},
      eprint={1606.01540},
      archivePrefix={arXiv},
      primaryClass={cs.LG},
      url={https://arxiv.org/abs/1606.01540}, 
}

@misc{tassa_deepmind_2018,
      author={Yuval Tassa and Yotam Doron and Alistair Muldal and Tom Erez and Yazhe Li and Diego de Las Casas and David Budden and Abbas Abdolmaleki and Josh Merel and Andrew Lefrancq and Timothy Lillicrap and Martin Riedmiller},
      title={{DeepMind Control Suite}}, 
      year={2018},
      url={https://arxiv.org/abs/1801.00690}, 
}

@book{rasmussen_gaussian_2005,
author = {Rasmussen, Carl Edward and Williams, Christopher K. I.},
title = {Gaussian Processes for Machine Learning},
year = {2005},
isbn = {026218253X},
publisher = {The MIT Press},
url={https://GaussianProcess.org/gpml/}
}

@article{lakshminarayanan_ensembles_2017,
      title={Simple and Scalable Predictive Uncertainty Estimation using Deep Ensembles}, 
      author={Balaji Lakshminarayanan and Alexander Pritzel and Charles Blundell},
      year={2017},
    journal = {Conference on Neural Information Processing Systems (NIPS)},
      url={https://arxiv.org/abs/1612.01474}, 
}

@article{haarnoja_sac_2018,
      title={Soft Actor-Critic: Off-Policy Maximum Entropy Deep Reinforcement Learning with a Stochastic Actor}, 
      author={Tuomas Haarnoja and Aurick Zhou and Pieter Abbeel and Sergey Levine},
      year={2018},
journal={International Conference on Machine Learning (ICML)},
      url={https://proceedings.mlr.press/v80/haarnoja18b.html}, 

}

@article{pinneri_icem_2020,
      author={Cristina Pinneri and Shambhuraj Sawant and Sebastian Blaes and Jan Achterhold and Joerg Stueckler and Michal Rolinek and Georg Martius},
      title={Sample-efficient Cross-Entropy Method for Real-time Planning}, 
    journal = {Conference on Robot Learning ({CoRL})},
    year = {2021},
url = {https://proceedings.mlr.press/v155/pinneri21a.html}
}

@article{sukhija2024maxinforl,
    title={{MaxInfoRL}: Boosting exploration in reinforcement learning through information gain maximization},
    author={Bhavya Sukhija and Stelian Coros and Andreas Krause and Pieter Abbeel and Carmelo Sferrazza},
    year={2025},
    journal = {International Conference on Learning Representations (ICLR)},
url ={https://openreview.net/forum?id=R4q3cY3kQf}
}

@ARTICLE{srinivas,
  author={Srinivas, Niranjan and Krause, Andreas and Kakade, Sham M. and Seeger, Matthias W.},
  journal={IEEE Transactions on Information Theory}, 
  title={Information-Theoretic Regret Bounds for Gaussian Process Optimization in the Bandit Setting}, 
  year={2012}, 
url={https://ieeexplore.ieee.org/document/6138914}}

@article{chowdhury2017kernelized,
  title={On kernelized multi-armed bandits},
  author={Chowdhury, Sayak Ray and Gopalan, Aditya},
  journal={International Conference on Machine Learning (ICML)},
  year={2017},
  url={https://proceedings.mlr.press/v70/chowdhury17a.html},
}

@misc{kanagawa2018gaussian,
      title={Gaussian Processes and Kernel Methods: A Review on Connections and Equivalences}, 
      author={Motonobu Kanagawa and Philipp Hennig and Dino Sejdinovic and Bharath K. Sriperumbudur},
      year={2018},
      eprint={1807.02582},
      archivePrefix={arXiv},
      primaryClass={stat.ML},
      url={https://arxiv.org/abs/1807.02582}, 
}

@phdthesis{berkenkamp2019safe,
	copyright = {In Copyright - Non-Commercial Use Permitted},
	year = {2019},
	type = {Doctoral Thesis},
	author = {Berkenkamp, Felix},
	size = {206 p.},
	language = {en},
	publisher = {ETH Zurich},
	DOI = {10.3929/ethz-b-000370833},
	title = {Safe Exploration in Reinforcement Learning: Theory and Applications in Robotics},
    url={https://research-collection.ethz.ch/handle/20.500.11850/370833},
	school = {ETH Zurich}
}

@article{
sukhija_optimism_2025,
title={{SOMBRL}: Scalable and Optimistic Model-Based {RL}},
author={Bhavya Sukhija and Lenart Treven and Carmelo Sferrazza and Florian Dörfler and Pieter Abbeel and Andreas Krause},
journal = {Conference on Neural Information Processing Systems (NeurIPS)},
year={2025},
url={https://openreview.net/forum?id=eGfi5k7RP6}
}

@incollection{luenberger_oc_1971,
  author    = {David G. Luenberger},
  title     = {Optimal Control},
  booktitle = {Introduction to Dynamic Systems},
  publisher = {John Wiley \& Sons},
  year      = {1979},
  pages     = {393--435},
  address   = {New York},
  isbn      = {0-471-02594-1}
}

@article{treven_wtssc_2024,
 author = {Treven, Lenart and Sukhija, Bhavya and As, Yarden and D\"{o}rfler, Florian and Krause, Andreas},
 title = {When to Sense and Control? {A} Time-adaptive Approach for Continuous-Time {RL}},
    journal = {Conference on Neural Information Processing Systems (NeurIPS)},
    year = 2024,
      url={https://arxiv.org/abs/2406.01163}, 
}

@article{yildiz_continuous_2021,
  author =       {Yildiz, Cagatay and Heinonen, Markus and L{\"a}hdesm{\"a}ki, Harri},
  title = 	 {Continuous-time Model-based Reinforcement Learning},
    journal = {International Conference on Machine Learning (ICML)},
    year = 2021,
  url = 	 {https://proceedings.mlr.press/v139/yildiz21a.html},
}

@article{astrom_system_1971,
title = {System identification -- {A} survey},
journal = {Automatica},
volume = {7},
number = {2},
pages = {123-162},
year = {1971},
url = {https://doi.org/10.1016/0005-1098(71)90059-8},
author = {{\AA}ström, Karl Johan and Eykhoff, Pieter},
abstract = {....}
}

@article{spiridonov_spacehopper_2024,
  author={Spiridonov, Alexander and Buehler, Fabio and Berclaz, Moriz and Schelbert, Valerio and Geurts, Jorit and Krasnova, Elena and Steinke, Emma and Toma, Jonas and Wuethrich, Joschua and Polat, Recep and Zimmermann, Wim and Arm, Philip and Rudin, Nikita and Kolvenbach, Hendrik and Hutter, Marco},
  title={SpaceHopper: A Small-Scale Legged Robot for Exploring Low-Gravity Celestial Bodies}, 
    journal = {IEEE International Conference on Robotics and Automation (ICRA)},
    year = 2024,
url = {https://ieeexplore.ieee.org/document/10610057}

}

@article{hafner_mastering_2024,
author={Hafner, Danijar
and Pasukonis, Jurgis
and Ba, Jimmy
and Lillicrap, Timothy},
title={Mastering diverse control tasks through world models},
journal={Nature},
year={2025},
day={01},
volume={640},
number={8059},
pages={647-653},
abstract={Developing a general algorithm that learns to solve tasks across a wide range of applications has been a fundamental challenge in artificial intelligence. Although current reinforcement-learning algorithms can be readily applied to tasks similar to what they have been developed for, configuring them for new application domains requires substantial human expertise and experimentation1,2. Here we present the third generation of Dreamer, a general algorithm that outperforms specialized methods across over 150 diverse tasks, with a single configuration. Dreamer learns a model of the environment and improves its behaviour by imagining future scenarios. Robustness techniques based on normalization, balancing and transformations enable stable learning across domains. Applied out of the box, Dreamer is, to our knowledge, the first algorithm to collect diamonds in Minecraft from scratch without human data or curricula. This achievement has been posed as a substantial challenge in artificial intelligence that requires exploring farsighted strategies from pixels and sparse rewards in an open world3. Our work allows solving challenging control problems without extensive experimentation, making reinforcement learning broadly applicable.},
url={https://doi.org/10.1038/s41586-025-08744-2}
}

@Article{degrave_nuclear_2022,
author={Degrave, Jonas
and Felici, Federico
and Buchli, Jonas
and Neunert, Michael
and Tracey, Brendan
and Carpanese, Francesco
and Ewalds, Timo
and Hafner, Roland
and Abdolmaleki, Abbas
and de las Casas, Diego
and Donner, Craig
and Fritz, Leslie
and Galperti, Cristian
and Huber, Andrea
and Keeling, James
and Tsimpoukelli, Maria
and Kay, Jackie
and Merle, Antoine
and Moret, Jean-Marc
and Noury, Seb
and Pesamosca, Federico
and Pfau, David
and Sauter, Olivier
and Sommariva, Cristian
and Coda, Stefano
and Duval, Basil
and Fasoli, Ambrogio
and Kohli, Pushmeet
and Kavukcuoglu, Koray
and Hassabis, Demis
and Riedmiller, Martin},
title={Magnetic control of tokamak plasmas through deep reinforcement learning},
journal={Nature},
year={2022},
volume={602},
number={7897},
pages={414-419},
abstract={Nuclear fusion using magnetic confinement, in particular in the tokamak configuration, is a promising path towards sustainable energy. A core challenge is to shape and maintain a high-temperature plasma within the tokamak vessel. This requires high-dimensional, high-frequency, closed-loop control using magnetic actuator coils, further complicated by the diverse requirements across a wide range of plasma configurations. In this work, we introduce a previously undescribed architecture for tokamak magnetic controller design that autonomously learns to command the full set of control coils. This architecture meets control objectives specified at a high level, at the same time satisfying physical and operational constraints. This approach has unprecedented flexibility and generality in problem specification and yields a notable reduction in design effort to produce new plasma configurations. We successfully produce and control a diverse set of plasma configurations on the Tokamak {\`a} Configuration Variable1,2, including elongated, conventional shapes, as well as advanced configurations, such as negative triangularity and `snowflake' configurations. Our approach achieves accurate tracking of the location, current and shape for these configurations. We also demonstrate sustained `droplets' on TCV, in which two separate plasmas are maintained simultaneously within the vessel. This represents a notable advance for tokamak feedback control, showing the potential of reinforcement learning to accelerate research in the fusion domain, and is one of the most challenging real-world systems to which reinforcement learning has been applied.},
url={https://doi.org/10.1038/s41586-021-04301-9}
}

@book{khalil2015nonlinear,
  author   = {Khalil, Hassan},
  title    = {{Nonlinear Control}},
  abstract = {{For a first course on nonlinear control that can be taught in one semester
This book emerges from the award-winning book, Nonlinear Systems, but has a distinctly different mission and organization. While Nonlinear Systems was intended as a reference and a 
text on nonlinear system analysis and its application to control, this streamlined book is intended as a text for a first course on nonlinear control. In Nonlinear Control, author
 Hassan K. Khalil employs a writing style that is intended to make the book accessible to a wider audience without compromising the rigor of the presentation. Teaching and Learning 
 ExperienceThis program will provide a better teaching and learning experience–for you and your students. It will help:Provide an Accessible Approach to Nonlinear Control: This streamlined 
 book is intended as a text for a first course on nonlinear control that can be taught in one semester.  Support Learning: Over 250 end-of-chapter exercises give students plenty of 
 opportunities to put theory into action.
}},
  pages    = {400},
  publisher = {Pearson New York},
  year     = {2014},
  isbn     = {9781292060507},
  doi      = {},
  url      = {https://elibrary.pearson.de/book/99.150005/9781292060699}
}

@article{kuleshov2018accurate,
  title={Accurate uncertainties for deep learning using calibrated regression},
  author={Kuleshov, Volodymyr and Fenner, Nathan and Ermon, Stefano},
  journal={International Conference on Machine Learning (ICML)},
  year={2018},
url={https://proceedings.mlr.press/v80/kuleshov18a.html}
}

@article{pathak_selfsupervised_2019,
      title={Self-Supervised Exploration via Disagreement}, 
      author={Deepak Pathak and Dhiraj Gandhi and Abhinav Gupta},
      year={2019},
      journal = {International Conference on Machine Learning (ICML)},
      url={https://proceedings.mlr.press/v97/pathak19a.html}, 
}

@article{sekar_planning_2020,
      title={Planning to Explore via Self-Supervised World Models}, 
      author={Ramanan Sekar and Oleh Rybkin and Kostas Daniilidis and Pieter Abbeel and Danijar Hafner and Deepak Pathak},
    journal = {International Conference on Machine Learning (ICML)},
    year = 2020,
      url={https://proceedings.mlr.press/v119/sekar20a.html}, 

}

@TECHREPORT{moore_efficient_1990,
    author = {Andrew William Moore},
    title = {Efficient Memory-based Learning for Robot Control},
    institution = {University of Cambridge},
    year = {1990},
    url = {https://www.cl.cam.ac.uk/techreports/UCAM-CL-TR-209.pdf}
}

@article{wawrzynski_halfcheetah_2009,
author="Wawrzy{\'{n}}ski, Pawe{\l}",
title="A Cat-Like Robot Real-Time Learning to Run",
    journal = {International Conference on Adaptive and Natural Computing Algorithms (ICANNGA)},
    year = 2009,
url={https://staff.elka.pw.edu.pl/~pwawrzyn/pub-s/0812_LSCLRR.pdf},
}

@article{erez_infinite_2012,
  title={Infinite-horizon model predictive control for periodic tasks with contacts},
  author={Erez, Tom and Tassa, Yuval and Todorov, Emanuel},
  journal={Robotics:~Science and systems (RSS)},
  year={2012},
  url={https://roboticsproceedings.org/rss07/p10.html}
}

@article{todorov_mujoco_2012,
  author={Todorov, Emanuel and Erez, Tom and Tassa, Yuval},
  title={{MuJoCo}: A physics engine for model-based control},
  year={2012},
    journal={International Conference on Intelligent Robots and Systems (IROS)},
  url={https://ieeexplore.ieee.org/document/6386109}
}

@article{doya_reinforcement_2000,
  title={Reinforcement learning in continuous time and space},
  author={Doya, Kenji},
  journal={Neural computation},
  volume={12},
  number={1},
  pages={219--245},
  year={2000},
  publisher={MIT Press One Rogers Street, Cambridge, MA 02142-1209, USA journals-info~…},
url={https://ieeexplore.ieee.org/document/6789455}
}

@article{fremaux_reinforcement_2013,
  author    = {Nicolas Frémaux and Henning Sprekeler and Wulfram Gerstner},
  title     = {Reinforcement Learning Using a Continuous Time Actor-Critic Framework with Spiking Neurons},
  journal   = {PLoS Computational Biology},
  year      = {2013},
  volume    = {9},
  number    = {4},
  pages     = {e1003024},
  url       = {https://doi.org/10.1371/journal.pcbi.1003024},
  publisher = {Public Library of Science},
}

@article{chen_neural_2018,
 author = {Chen, Ricky T. Q. and Rubanova, Yulia and Bettencourt, Jesse and Duvenaud, David K},
 title = {Neural Ordinary Differential Equations},
  journal={Conference on Neural Information Processing Systems (NeurIPS)},
  year={2018},
      url={https://arxiv.org/abs/1806.07366}, 
}

@misc{cranmer_lagrangian_2020,
      title={Lagrangian Neural Networks}, 
      author={Miles Cranmer and Sam Greydanus and Stephan Hoyer and Peter Battaglia and David Spergel and Shirley Ho},
      year={2020},
      eprint={2003.04630},
      archivePrefix={arXiv},
      primaryClass={cs.LG},
      url={https://arxiv.org/abs/2003.04630}, 
}

@article{greydanus_hamiltonian_2019,
  title={Hamiltonian neural networks},
  author={Greydanus, Samuel and Dzamba, Misko and Yosinski, Jason},
  journal={Conference on Neural Information Processing Systems (NeurIPS)},
  year={2019},
      url={https://arxiv.org/abs/1906.01563}, 
}

@article{mackay1992practical,
  title={A practical Bayesian framework for backpropagation networks},
  author={MacKay, David J.~C.},
  journal={Neural computation},
  volume={4},
  number={3},
  pages={448--472},
  year={1992},
  publisher={MIT Press One Rogers Street, Cambridge, MA 02142-1209, USA journals-info~…},
url={https://doi.org/10.1162/neco.1992.4.3.448}
}

@misc{aubret2019survey,
      title={A survey on intrinsic motivation in reinforcement learning}, 
      author={Arthur Aubret and Laetitia Matignon and Salima Hassas},
      year={2019},
      eprint={1908.06976},
      archivePrefix={arXiv},
      primaryClass={cs.LG},
      url={https://arxiv.org/abs/1908.06976}, 
}

@article{lewis1994heterogeneous,
  author={Lewis, David D. and Catlett, Jason},
  title={Heterogeneous uncertainty sampling for supervised learning},
    journal = {Machine Learning Proceedings 1994},
pages = {148-156},
year = {1994},
url = {https://doi.org/10.1016/B978-1-55860-335-6.50026-X}
}

@article{
hwangbo_learning_2019,
author = {Jemin Hwangbo  and Joonho Lee  and Alexey Dosovitskiy  and Dario Bellicoso  and Vassilios Tsounis  and Vladlen Koltun  and Marco Hutter },
title = {Learning agile and dynamic motor skills for legged robots},
journal = {Science Robotics},
volume = {4},
number = {26},
pages = {eaau5872},
year = {2019},
URL = {https://www.science.org/doi/abs/10.1126/scirobotics.aau5872},
eprint = {https://www.science.org/doi/pdf/10.1126/scirobotics.aau5872},
abstract = {A method for learning agile control policies uses simulated data to enable precise, efficient movements in a complex physical robot. Legged robots pose one of the greatest challenges in robotics. Dynamic and agile maneuvers of animals cannot be imitated by existing methods that are crafted by humans. A compelling alternative is reinforcement learning, which requires minimal craftsmanship and promotes the natural evolution of a control policy. However, so far, reinforcement learning research for legged robots is mainly limited to simulation, and only few and comparably simple examples have been deployed on real systems. The primary reason is that training with real robots, particularly with dynamically balancing systems, is complicated and expensive. In the present work, we introduce a method for training a neural network policy in simulation and transferring it to a state-of-the-art legged system, thereby leveraging fast, automated, and cost-effective data generation schemes. The approach is applied to the ANYmal robot, a sophisticated medium-dog–sized quadrupedal system. Using policies trained in simulation, the quadrupedal machine achieves locomotion skills that go beyond what had been achieved with prior methods: ANYmal is capable of precisely and energy-efficiently following high-level body velocity commands, running faster than before, and recovering from falling even in complex configurations.}}

@Article{schrittwieser_mastering_2020,
author={Schrittwieser, Julian
and Antonoglou, Ioannis
and Hubert, Thomas
and Simonyan, Karen
and Sifre, Laurent
and Schmitt, Simon
and Guez, Arthur
and Lockhart, Edward
and Hassabis, Demis
and Graepel, Thore
and Lillicrap, Timothy
and Silver, David},
title={Mastering Atari, Go, chess and shogi by planning with a learned model},
journal={Nature},
year={2020},
volume={588},
number={7839},
pages={604-609},
abstract={Constructing agents with planning capabilities has long been one of the main challenges in the pursuit of artificial intelligence. Tree-based planning methods have enjoyed huge success in challenging domains, such as chess1 and Go2, where a perfect simulator is available. However, in real-world problems, the dynamics governing the environment are often complex and unknown. Here we present the MuZero algorithm, which, by combining a tree-based search with a learned model, achieves superhuman performance in a range of challenging and visually complex domains, without any knowledge of their underlying dynamics. The MuZero algorithm learns an iterable model that produces predictions relevant to planning: the action-selection policy, the value function and the reward. When evaluated on 57 different Atari games3---the canonical video game environment for testing artificial intelligence techniques, in which model-based planning approaches have historically struggled4---the MuZero algorithm achieved state-of-the-art performance. When evaluated on Go, chess and shogi---canonical environments for high-performance planning---the MuZero algorithm matched, without any knowledge of the game dynamics, the superhuman performance of the AlphaZero algorithm5 that was supplied with the rules of the game.},
url={https://doi.org/10.1038/s41586-020-03051-4}
}

@article{yu_healthcare_2021,
author = {Yu, Chao and Liu, Jiming and Nemati, Shamim and Yin, Guosheng},
title = {Reinforcement Learning in Healthcare: A Survey},
year = {2021},
issue_date = {January 2023},
publisher = {Association for Computing Machinery},
address = {New York, NY, USA},
volume = {55},
number = {1},
pages={1-36},
url = {https://doi.org/10.1145/3477600},
abstract = {As a subfield of machine learning, reinforcement learning (RL) aims at optimizing decision making by using interaction samples of an agent with its environment and the potentially delayed feedbacks. In contrast to traditional supervised learning that typically relies on one-shot, exhaustive, and supervised reward signals, RL tackles sequential decision-making problems with sampled, evaluative, and delayed feedbacks simultaneously. Such a distinctive feature makes RL techniques a suitable candidate for developing powerful solutions in various healthcare domains, where diagnosing decisions or treatment regimes are usually characterized by a prolonged period with delayed feedbacks. By first briefly examining theoretical foundations and key methods in RL research, this survey provides an extensive overview of RL applications in a variety of healthcare domains, ranging from dynamic treatment regimes in chronic diseases and critical care, automated medical diagnosis, and many other control or scheduling problems that have infiltrated every aspect of the healthcare system. In addition, we discuss the challenges and open issues in the current research and highlight some potential solutions and directions for future research.},
journal = {ACM Computing Surveys (CSUR)},
articleno = {5},
numpages = {36},
keywords = {automated diagnosis, chronic disease, critical care, dynamic treatment regimes, healthcare, Reinforcement learning}
}

@article{levine_end-to-end_2016,
  author  = {Sergey Levine and Chelsea Finn and Trevor Darrell and Pieter Abbeel},
  title   = {End-to-End Training of Deep Visuomotor Policies},
  journal = {Journal of Machine Learning Research},
  year    = {2016},
  volume  = {17},
  number  = {39},
  pages   = {1--40},
  url     = {http://jmlr.org/papers/v17/15-522.html}
}

@article{rothfuss2024bridging,
  title={Bridging the Sim-to-Real Gap with Bayesian Inference},
  author={Rothfuss, Jonas and Sukhija, Bhavya and Treven, Lenart and D{\"o}rfler, Florian and Coros, Stelian and Krause, Andreas},
  journal={IEEE/RSJ International Conference on Intelligent Robots and Systems (IROS)},
  year={2024},
url={https://ieeexplore.ieee.org/abstract/document/10801505}
}

@article{hansen2022modem,
      title={{MoDem}: Accelerating Visual Model-Based Reinforcement Learning with Demonstrations}, 
      author={Nicklas Hansen and Yixin Lin and Hao Su and Xiaolong Wang and Vikash Kumar and Aravind Rajeswaran},
  journal={International Conference on Learning Representations (ICLR)},
  year={2023},
url={https://openreview.net/forum?id=JdTnc9gjVfJ}
}

@article{cesa2017boltzmann,
  title={Boltzmann exploration done right},
  author={Cesa-Bianchi, Nicol{\`o} and Gentile, Claudio and Lugosi, G{\'a}bor and Neu, Gergely},
  journal={Conference on Neural Information Processing Systems (NIPS)},
  year={2017},
      url={https://arxiv.org/abs/1705.10257}, 
}

@article{abeille2020efficient,
  title={Efficient optimistic exploration in linear-quadratic regulators via lagrangian relaxation},
  author={Abeille, Marc and Lazaric, Alessandro},
  journal={International Conference on Machine Learning (ICML)},
  year={2020},
    url={https://proceedings.mlr.press/v119/abeille20a.html}
}

@article{janner2019trust,
  title={When to trust your model: Model-based policy optimization},
  author={Janner, Michael and Fu, Justin and Zhang, Marvin and Levine, Sergey},
  journal={Conference on Neural Information Processing Systems (NeurIPS)},
  year={2019},
      url={https://arxiv.org/abs/1906.08253}, 
}

@article{auer2002finite,
author={Auer, Peter
and Cesa-Bianchi, Nicol{\`o}
and Fischer, Paul},
title={Finite-time Analysis of the Multiarmed Bandit Problem},
journal={Machine Learning},
year={2002},
day={01},
volume={47},
number={2},
pages={235-256},
abstract={Reinforcement learning policies face the exploration versus exploitation dilemma, i.e. the search for a balance between exploring the environment to find profitable actions while taking the empirically best action as often as possible. A popular measure of a policy's success in addressing this dilemma is the regret, that is the loss due to the fact that the globally optimal policy is not followed all the times. One of the simplest examples of the exploration/exploitation dilemma is the multi-armed bandit problem. Lai and Robbins were the first ones to show that the regret for this problem has to grow at least logarithmically in the number of plays. Since then, policies which asymptotically achieve this regret have been devised by Lai and Robbins and many others. In this work we show that the optimal logarithmic regret is also achievable uniformly over time, with simple and efficient policies, and for all reward distributions with bounded support.},
url={https://doi.org/10.1023/A:1013689704352}
}

@article{salge2014empowerment,
  title={Empowerment--an introduction},
  author={Salge, Christoph and Glackin, Cornelius and Polani, Daniel},
  journal={Guided Self-Organization: Inception},
  year={2014},
url="https://doi.org/10.1007/978-3-642-53734-9_4"

}

@article{bellemare2016unifying,
  title={Unifying count-based exploration and intrinsic motivation},
  author={Bellemare, Marc and Srinivasan, Sriram and Ostrovski, Georg and Schaul, Tom and Saxton, David and Munos, Remi},
  journal={Conference on Neural Information Processing Systems (NIPS)},
  year={2016},
      url={https://arxiv.org/abs/1606.01868}, 
}

@article{pathak2017curiosity,
  author={Pathak, Deepak and Agrawal, Pulkit and Efros, Alexei A and Darrell, Trevor},
  title={Curiosity-driven exploration by self-supervised prediction},
    journal = {International Conference on Machine Learning (ICML)},
    year = {2017},
url={https://proceedings.mlr.press/v70/pathak17a.html}

}

@article{hansen2023td,
  title={{TD-MPC2}: Scalable, robust world models for continuous control},
  author={Hansen, Nicklas and Su, Hao and Wang, Xiaolong},
  journal={International Conference on Learning Representations (ICLR)},
  year={2024},
url={https://openreview.net/forum?id=Oxh5CstDJU}
}

@misc{grimaldi2024bayesian,
  title={The Bayesian Separation Principle for Data-driven Control},
  author={Grimaldi, Riccardo A. and Baggio, Giacomo and Carli, Ruggero and Pillonetto, Gianluigi},
  year={2024},
      url={https://arxiv.org/abs/2409.16717}, 
}

@article{oro_sample_2023,
author={Pierluca D'Oro and Max Schwarzer and Evgenii Nikishin and Pierre-Luc Bacon and Marc G Bellemare and Aaron Courville},
title={Sample-Efficient Reinforcement Learning by Breaking the Replay Ratio Barrier},
    journal = {International Conference on Learning Representations (ICLR)},
    year = {2023},
url={https://openreview.net/forum?id=OpC-9aBBVJe}
}

@misc{zhao_policy_2025,
  title   = {{Policy Gradient Adaptive Control for the LQR: Indirect and Direct Approaches}},
  author  = {Zhao, Feiran and Chiuso, Alessandro and Dörfler, Florian},
  year    = {2025},
  url     = {http://arxiv.org/abs/2505.03706},
}

@article{zhao_regularization_2025,
  title   = {{Regularization for Covariance Parameterization of Direct Data-Driven LQR Control}},
  author  = {Zhao, Feiran and Chiuso, Alessandro and Dörfler, Florian},
  journal = {IEEE Control Systems Letters},
  volume  = {9},
  pages   = {961--966},
  year    = {2025},
  url     = {https://ieeexplore.ieee.org/document/11030798},
}

@inproceedings{heemels2012introduction,
  title     = {An introduction to event-triggered and self-triggered control},
  author    = {Heemels, Wilhelmus P. M. H. and Johansson, Karl Henrik and Tabuada, Paulo},
  booktitle = {IEEE Conference on Decision and Control (CDC)},
  pages     = {3270--3285},
  year      = {2012},
  url       = {https://doi.org/10.1109/CDC.2012.6425820},
}

@incollection{heemels2021event,
  title     = {Event-triggered and self-triggered control},
  author    = {Heemels, Wilhelmus P. M. H. and Johansson, Karl Henrik and Tabuada, Paulo},
  booktitle = {Encyclopedia of Systems and Control},
  pages     = {724--730},
  publisher = {Springer},
  year      = {2021},
  url = {https://doi.org/10.1007/978-3-030-44184-5_97},
}

@inproceedings{astrom2002comparison,
  title     = {Comparison of Riemann and Lebesgue Sampling for First Order Stochastic Systems},
  author    = {{\AA}str{\"o}m, Karl Johan and Bernhardsson, Bo M.},
  booktitle = {IEEE Conference on Decision and Control (CDC)},
  pages     = {2011--2016},
  year      = {2002},
  url       = {https://doi.org/10.1109/CDC.2002.1184824},
}

@article{anta2010sample,
  title   = {To Sample or not to Sample: Self-Triggered Control for Nonlinear Systems},
  author  = {Anta, Adolfo and Tabuada, Paulo},
  journal = {IEEE Transactions on Automatic Control},
  volume  = {55},
  number  = {9},
  pages   = {2030--2042},
  year    = {2010},
  url     = {https://doi.org/10.1109/TAC.2010.2042980},
}

@Article{Rubinstein1999,
author={Rubinstein, Reuven},
title={The Cross-Entropy Method for Combinatorial and Continuous Optimization},
journal={Methodology And Computing In Applied Probability},
year={1999},
volume={1},
number={2},
pages={127-190},
url={https://doi.org/10.1023/A:1010091220143}
}
}


\newpage
\appendix



\section*{Appendices and supplementary material}
\addcontentsline{toc}{section}{Appendix}

\startcontents[sections]
\printcontents[sections]{ }{0}[2]{}

\newpage

\section{Additional related work}
\label{sec:related-works}

\subsection{Model-based reinforcement learning}  
Model-based RL (MBRL) has emerged as a sample-efficient alternative to model-free methods, with applications ranging from robotics to online decision-making~\citep{chua_deep_2018, janner2019trust, hansen2022modem, rothfuss2024bridging}. Recent deep MBRL methods differ primarily in dynamics modeling and planning strategies, yet often rely on naive exploration heuristics such as Boltzmann exploration~\citep{hansen2023td, hafner_mastering_2024}. However, such heuristics are suboptimal even in simple settings~\citep{cesa2017boltzmann}.

\combrl addresses this by introducing a principled exploration mechanism that combines epistemic uncertainty with extrinsic reward. Unlike prior methods, it is model- and planner-agnostic, scalable, and comes with sublinear regret guarantees. We show that this intrinsic reward formulation not only improves theoretical performance but also enables meaningful exploration across deep RL benchmarks, where naive methods fail.

\subsection{Unsupervised reinforcement learning and intrinsic exploration}  
System identification~\citep{astrom_system_1971} and unsupervised exploration~\citep{aubret2019survey} require efficient exploration, as the agent must learn accurate global dynamics models without access to extrinsic rewards. In such settings, exploration is essential for covering informative regions of the state-action space. To this end, intrinsic motivation techniques have long been used to encourage exploration in RL, making use of objectives like model prediction error~\citep{pathak2017curiosity}, novelty~\citep{bellemare2016unifying}, empowerment~\citep{salge2014empowerment}, and information gain~\citep{sekar_planning_2020, sukhija_optimistic_2023}. However, such techniques are often used in isolation from extrinsic rewards. \combrl instead combines epistemic uncertainty, which poses a principled intrinsic signal, with task rewards, aligning exploration with learning progress. 

In the unsupervised setting, \combrl reduces to active learning~\cite{taylor_active_2021}, specifically uncertainty sampling~\citep{lewis1994heterogeneous} and guides the agent toward regions where the model is most uncertain. Similar ideas have been explored in bandits~\citep{auer2002finite, srinivas}, data-driven control~\citep{grimaldi2024bayesian}, and RL~\citep{abeille2020efficient, sukhija2024maxinforl}, where joint optimization of reward and model uncertainty is shown to improve learning. For the case of data-driven control in linear-quadratic systems, \citet{zhao_policy_2025, zhao_regularization_2025} use a similar weighting mechanism which is initially motivated by robustness and regularization, but interpretable as trading off exploitation and exploration, showing how uncertainty-aware regularization or exploration bonuses can systematically balance performance and robustness in continuous control.

\combrl follows this direction, using model epistemic uncertainty as a reward bonus to guide exploration. Our work is closely related to~\citet{sukhija_optimism_2025}, who propose a similar reward formulation in the discrete-time setting. However, our focus on continuous-time systems leads to significantly different theoretical analysis and experimental design. In particular, we derive regret and sample complexity bounds tailored to the continuous-time domain and analyze the effect of the intrinsic reward weight $\lambda_n$ in greater depth. Unlike prior work, which is mostly empirical or limited to linear systems, \combrl also provides theoretical guarantees for general nonlinear systems in continuous time and demonstrates scalability to high-dimensional tasks.

\subsection{Continuous-time reinforcement learning}
While most model-based RL methods are developed in discrete time, continuous-time formulations have gained increasing interest due to their relevance for real-world control and physical modelling~\citep{doya_reinforcement_2000, fremaux_reinforcement_2013, yildiz_continuous_2021}. Recent works explore learning dynamics via neural ODEs~\citep{chen_neural_2018} and physics-informed priors ~\citep{greydanus_hamiltonian_2019, cranmer_lagrangian_2020}. 

\citet{yildiz_continuous_2021} propose a continuous-time actor-critic method that plans using the posterior mean of the learned ODE model. \citet{treven_efficient_2023, treven_wtssc_2024} derive regret bounds for continuous-time MBRL using optimistic dynamics and highlight the role of different measurement selection strategies, showing that adaptive sampling can substantially reduce the number of interactions needed to match or surpass discrete-time methods. A complementary line of research is event- and self-triggered control~\citep{astrom2002comparison, anta2010sample, heemels2012introduction, heemels2021event}, which seeks efficiency by updating control inputs only when certain conditions are met, thereby avoiding unnecessary interventions while maintaining stability.  

\combrl extends this line of work by proposing a flexible framework for both reward-driven and unsupervised exploration in continuous time. In contrast to prior methods, \combrl incorporates optimism directly in the reward function and thus offers a simple, scalable, and theoretically grounded approach that operates directly in the continuous-time domain and supports general-purpose dynamics models.
\section{Theory}
\label{app:proofs}

We provide the assumptions and proofs for Theorems~\ref{thm: one} and~\ref{thm: two}, which formalize the regret and exploration guarantees of \combrl, in this section. The former bounds regret in terms of model complexity, which is sublinear for common GP kernels and MSSs, implying convergence to the optimal policy. The latter shows that intrinsic exploration alone (i.e.,~$\lambda\rightarrow\infty$) reduces epistemic uncertainty at a rate of $\sqrt{\complexity_N^3 / N}$. To the best of our knowledge, we are the first to show this for continuous-time RL.

\subsection{Assumptions} \label{app:assumptions}
In the following, we restate the assumptions from \cref{ssec:ass} for completeness. We make some common assumptions (cf.~\cite{curi_efficient_2020, treven_efficient_2023}) that allow us to theoretically analyse the regret $R_N$ and prove a regret bound. We first make an assumption on the continuity of the underlying system and the observation noise.

\begin{repassumption}{ass:one}[Lipschitz continuity]
    The dynamics model $\vf^*$, reward $r$, and all policies $\vpi \in \Pi$ are $L_f$, $L_r$ and $L_{\vpi}$ Lipschitz-continuous, respectively.
\end{repassumption}
\begin{repassumption}{ass: Lipschitz}[Sub-Gaussian noise]
    We assume that the measurement noise $\epsilon_{n,i}$ is i.i.d. $\sigma$-sub Gaussian.
\end{repassumption}
The Lipschitz assumption is commonly made for analysing nonlinear systems~\citep{khalil2015nonlinear} and is satisfied for many real-world applications. Furthermore, assuming $\sigma$-sub Gaussian noise~\citep{rigollet_high-dimensional_2023} is also fairly general and is common in both RL and Bayesian optimization literature~\citep{srinivas, chowdhury2017kernelized, curi_efficient_2020}.

In \combrl, we learn an uncertainty-aware model of the underlying dynamics.  Therefore, we obtain a mean estimate $\vmu_n(\vz)$ and quantify our epistemic uncertainty $\vsigma_n(\vz)$ about the function $\vf^*$.
\begin{repdefinition}{definition: well-calibrated model}[Well-calibrated statistical model of $\vf^*$, \cite{rothfuss_hallucinated_2023}]
    Let $\setZ = \setX \times \setU$.
    An all-time well-calibrated statistical model of the function $\vf^*$ is a sequence $\set{\setM_{n}(\delta)}_{n \ge 0}$, where
    \begin{multline*}
        \setM_n(\delta) \defeq \set{\vf: \setZ \to \R^{d_x} \mid \forall \vz \in \setZ, \forall j \in \set{1, \ldots, d_x}:
        \abs{\mu_{n, j}(\vz) - f_j(\vz)} \le \beta_n(\delta) \sigma_{n, j}(\vz)},
    \end{multline*}
    if, with probability at least $1-\delta$, we have $\vf^* \in \bigcap_{n \ge 0}\setM_n(\delta)$.
    Here, $\mu_{n, j}$ and $\sigma_{n, j}$ denote the $j$-th element in the vector-valued mean and standard deviation functions $\vmu_n$ and $\vsigma_n$ respectively, and $\beta_n(\delta) \in \Rzero$ is a scalar function that depends on the confidence level $\delta \in (0, 1]$ and which is monotonically increasing in $n$.
\end{repdefinition}
\begin{repassumption}{assumption: Well Calibration Assumption}[Well-calibration of the model]
The learned model is an all-time well-calibrated statistical model of $\vf^*$, i.e., with probability at least $1 - \delta$, we have $\vf^* \in \bigcap_{n \ge 0} \setM_n(\delta)$ for confidence sets $\setM_n(\delta)$ as defined in Definition~\ref{definition: well-calibrated model}. Moreover, the standard deviation functions $\vsigma_n: \setZ \to \R^{d_x}$ are $L_{\vsigma}$-Lipschitz continuous for all $n \ge 0$.
\end{repassumption}
\looseness=-1
Intuitively, \cref{assumption: Well Calibration Assumption} states that we are, with high probability, able to capture the dynamics within a confidence set spanned by our predicted mean and epistemic uncertainty.
For Gaussian process (GP) models, the assumption is satisfied~\cite[Lemma 3.6]{rothfuss_hallucinated_2023} and for more general classes of models such as Bayesian neural networks (BNNs), re-calibration techniques \citep{kuleshov2018accurate} can be used.
\pagebreak

Lastly, we make an assumption on the regularity of the dynamics by placing them in a reproducing kernel Hilbert space (RKHS):

\begin{repassumption}{ass:rkhs_func}[RKHS Prior on Dynamics]
We assume that the functions $f^*_j$, $j \in \set{1, \ldots, d_\vx}$ lie in a RKHS with kernel $k$ and have a bounded norm $B$, that is
\[
\vf^* \in \setH^{d_\vx}_{k, B}, \quad \text{with} \quad\setH^{d_\vx}_{k, B} = \{\vf \mid \norm{f_j}_k \leq B, j=1, \dots, d_\vx\}.
\]
Moreover, we assume that $k(\vz, \vz) \leq \sigma_{\max}$ for all $\vx \in \setX$.
\end{repassumption}

\subsection{Analysis of Gaussian process dynamics} \label{sec:gp_dynamics}
\cref{ass:rkhs_func} allows us to model $\vf^*$ with GPs. The posterior mean ${\bm \mu}_n(\vz) = [\mu_{n,j} (\vz)]_{j\leq d_\vx}$ and epistemic uncertainty $\vsigma_n(\vz) = [\sigma_{n,j} (\vz)]_{j\leq d_\vx}$ can then be obtained using
\begin{equation}
\begin{aligned}
\label{eq:GPposteriors}
        \mu_{n,j} (\vz)& = {\bm{k}}_{n}^\top(\vz)({\bm K}_{n} + \sigma^2 \bm{I})^{-1}\vy_{1:n}^j 
        ,  \\
     \sigma^2_{n, j}(\vz) & =  k(\vz, \vz) - {\bm k}^\top_{n}(\vz)({\bm K}_{n}+\sigma^2 \bm{I})^{-1}{\bm k}_{n}(\vz),
\end{aligned}
\end{equation}
Here, $\vy_{1:n}^j$ corresponds to the noisy measurements of $f^*_j$, i.e., the noisy derivative observation from the dataset $\setD_{1:n}$,
$\vk_n(\vz) = [k(\vz, \vz_i)]_{\vz_i \in \setD_{1:n}}$, and $\bm{K}_n = [k(\vz_i, \vz_l)]_{\vz_i, \vz_l \in \setD_{1:n}}$ is the data kernel matrix. 

Moreover, since $\vf^*$ has bounded RKHS norm, i.e., $\norm{\vf^*}_{k} \le B$ (\cref{ass:rkhs_func}),
it follows from \cite{srinivas, chowdhury2017kernelized} that with probability $1 - \delta$ we have for every episode $n$:
\begin{align*}
    \norm{\vf^* - \vmu_n}_{k_n} \le \beta_n.
\end{align*}

Instead of planning with the mean, which in general might not be Lipschitz continuous for all $n$, 
we select a function $\vf_n$ that not only approximates the $\vf^*$ function well, i.e.,~satisfies  $\norm{\vf^* - \vf_n}_{k_n} \le \beta_n$, but also has an RKHS norm that does not grow with $n$. To achieve this, we propose solving the following quadratic optimization problem:
\begin{align}
\label{eq: Lipschitz function optimization}
    \vf_n = &\argmin_{\vf \in \text{span}(k(\vx_1, \cdot), \ldots, k(\vx_n, \cdot))} \norm{\vf - \vmu_n}_{k_n} \\
    &\text{s.t.} \norm{\vf}_{k} \leq B \notag
\end{align}

\begin{theorem}
    The optimization problem \cref{eq: Lipschitz function optimization} is feasible and we have  $\norm{\vf_n - \vmu_n}_{k_n} \le 2\beta_n$.
\end{theorem}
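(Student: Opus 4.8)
The plan is to settle both claims at once by exhibiting a single explicit feasible point that already lies within $2\beta_n$ of $\vmu_n$ in the $\norm{\cdot}_{k_n}$-metric; optimality of $\vf_n$ in \cref{eq: Lipschitz function optimization} then delivers the bound, while the mere existence of this point establishes feasibility. Write $V_n \defeq \text{span}\{k(\vz_i,\cdot) : \vz_i \in \setD_{1:n}\}$ for the span of the data representers (applied coordinatewise across the $d_\vx$ outputs) and let $\Pi_n$ denote the orthogonal projection onto $V_n$ in the RKHS $\setH_k$ of $k$. The candidate is $\widetilde\vf \defeq \Pi_n\vf^*$. Since an orthogonal projection is non-expansive, $\norm{\widetilde{f}_j}_k \le \norm{f^*_j}_k \le B$ for each $j$ by \cref{ass:rkhs_func}, so $\widetilde\vf$ meets the constraint; the feasible set is therefore nonempty, and being convex, closed, and contained in the finite-dimensional space $V_n$, it guarantees that the minimum in \cref{eq: Lipschitz function optimization} is attained, so $\vf_n$ is well-defined.

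Next I would bound the objective. By optimality of $\vf_n$ together with feasibility of $\widetilde\vf$ and the triangle inequality,
\[
\norm{\vf_n - \vmu_n}_{k_n} \;\le\; \norm{\widetilde\vf - \vmu_n}_{k_n} \;\le\; \norm{\widetilde\vf - \vf^*}_{k_n} \;+\; \norm{\vf^* - \vmu_n}_{k_n}.
\]
The second term is at most $\beta_n$ with probability $1-\delta$: this is precisely the RKHS confidence bound of \cite{srinivas, chowdhury2017kernelized} recalled above, which applies under \cref{ass:rkhs_func}. So everything reduces to showing that the projection residual $\norm{\widetilde\vf - \vf^*}_{k_n}$ is likewise at most $\beta_n$.

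For the residual I would exploit that $\vf^* - \widetilde\vf$ is $\setH_k$-orthogonal to $V_n$, which by the reproducing property $\langle g, k(\vz_i,\cdot)\rangle_k = g(\vz_i)$ is equivalent to $(\vf^*-\widetilde\vf)(\vz_i) = \vzero$ at every observed input $\vz_i \in \setD_{1:n}$. The metric $\norm{\cdot}_{k_n}$ is the RKHS norm of the posterior kernel $k_n$ from \cref{eq:GPposteriors}, and writing the GP posterior in feature space gives the standard identity $\norm{g}_{k_n}^2 = \norm{g}_k^2 + \sigma^{-2}\sum_{\vz_i \in \setD_{1:n}}\abs{g(\vz_i)}^2$; hence on any function vanishing at all data points the two norms coincide. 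Therefore $\norm{\widetilde\vf - \vf^*}_{k_n} = \norm{\widetilde\vf - \vf^*}_k \le \norm{\vf^*}_k \le B \le \beta_n$, where the final inequality holds because $\beta_n$ equals $B$ (up to a factor $\sqrt\lambda \ge 1$) plus a nonnegative information-gain term. Substituting back yields $\norm{\vf_n - \vmu_n}_{k_n} \le \beta_n + \beta_n = 2\beta_n$, as claimed.

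The only genuinely delicate step is the third one: one must pin down the exact definition of the data-dependent norm $\norm{\cdot}_{k_n}$ entering the confidence bound and verify the stated identity — a standard fact about posterior GP kernels, obtained from the feature-space form of $k_n$, equivalently the decomposition of $\setH_k$ along $V_n$ and its orthogonal complement on which $k_n$ leaves the prior norm unchanged — from which the vanishing-on-data property is immediate. The remaining ingredients (non-expansiveness of the projection, the triangle inequality, attainment of the minimizer in finite dimensions, and $B \le \beta_n$) are routine. A minor bookkeeping point is whether the RKHS constraint and $\norm{\cdot}_{k_n}$ are meant coordinatewise or jointly over the $d_\vx$ output dimensions, but the argument above is unchanged in either reading.
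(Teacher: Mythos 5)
Your proof is correct and follows essentially the same route as the paper: you exhibit the same witness (your orthogonal projection $\Pi_n\vf^*$ is exactly the noise-free posterior mean $\bar{\vmu}_n$ the paper uses, by the minimum-norm-interpolant characterization), establish feasibility from non-expansiveness of the projection and the representer theorem, and conclude via the same triangle inequality. The only difference is cosmetic: where the paper cites Corollary 3.11 of Kanagawa et al.\ to bound the residual term by $\beta_n$, you derive it directly from the posterior-norm identity $\norm{g}_{k_n}^2 = \norm{g}_k^2 + \sigma^{-2}\sum_i \abs{g(\vz_i)}^2$ and the fact that the projection residual vanishes on the data, which is a valid self-contained substitute.
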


\begin{proof}
    Consider the noise-free case, i.e., $\epsilon_{n,i}=0$, and let $\bar{\vmu}_n$ be the posterior mean for this setting. For the function $\bar{\vmu}_n$, it holds that $\norm{\vf^* - \bar{\vmu}_n}_{k_n} \le \beta_n$~\cite[][Corollary 3.11]{kanagawa2018gaussian} and $\norm{\bar{\vmu}_n}_k \le B$~\cite[][Theorem 3.5]{kanagawa2018gaussian}. Thus it follows that 
    \[\norm{\bar{\vmu}_n - \vmu_n}_{k_n} \le \norm{\bar{\vmu}_n - \vf^*}_{k_n} + \norm{\vf^* - \vmu_n}_{k_n} \le 2\beta_n.
    \]
    By representer theorem, it also holds that $\bar{\vmu}_n \in \text{span}(k(\vz_1, \cdot), \ldots, k(\vz_n, \cdot))$.   
\end{proof}

Let $\valpha_n = (\mK + \sigma^2\mI)^{-1}\vy \in \R^n$ and reparametrize $\vf(\vx) = \sum_{i=1}^n\alpha_ik(\vx_i, \vx)$. We have $\norm{\vf}_{k}^2 = \valpha^\top \mK \valpha$. We also have:
\begin{align*}
    \norm{\vf - \vmu_n}_{k_n}^2 = (\valpha - \valpha_n)^\top \mK\left(\mI + \frac{1}{\sigma^2}\mK\right)(\valpha - \valpha_n)
\end{align*}
Hence the optimization problem in \cref{eq: Lipschitz function optimization} is equivalent to:
\begin{align*}
    &\min_{\valpha \in \R^n} (\valpha - \valpha_n)^\top \mK\left(\mI + \frac{1}{\sigma^2}\mK\right)(\valpha - \valpha_n) \\
    &\text{s.t. } \valpha^\top \mK \valpha \le B^2
\end{align*}
This is a quadratic program that can be solved using any QP solver. The program finds the closest function to the posterior mean $\vmu_n$ that is Lipschitz continuous. In particular, note that since $\norm{\vf_n}_k \leq B$, for Lipschitz kernels, $\vf_n$ has a Lipschitz constant $L_{B}$ which is independent of $n$~\citep{berkenkamp2019safe}. From hereon, let $L_{*} = \max\{L_f, L_B\}$. 

Next, we plan with the dynamics $\vf_n$ that are obtained from solving \cref{eq: Lipschitz function optimization}, i.e.,
\begin{align}
\vpi_n\! &=\! \arg\max_{\vpi \in \Pi}\;  \!\E_{\vpi} \!\!\left[ \int^{T}_{0} \big(r(\vx'(t), \vu(t)) + \lambda_n \norm{\vsigma_n(\vx'(t), \vu(t))}\big)dt\right]
  \label{eq:optimistic plan sub Gaussian} \\
  \text{s.t.} &\quad\dot{\vx}'(t)\!\! = \!\vf_n(\vx'(t), \vu(t)) \notag.
\end{align}

\begin{lemma}
Let \cref{ass: Lipschitz} and \cref{ass:rkhs_func} hold. 
Consider the following definitions:
\begin{align*}
    J(\vpi, \vf^*) &= \E\left[\int_{0}^{T} r(\vx(t), \vpi(\vx(t))) \,dt.\right] \; \\
    \text{s.t.} &\quad \Dot{\vx} = \vf^*(\vx(t), \vpi(\vx(t))), \quad \vx_0 = \vx(0), \\\\
    J(\vpi, \vf_n) &= \E\left[\int_{0}^{T} r(\vx'(t), \vpi(\vx'(t))) \,dt.\right] \; \\
    \text{s.t.} &\quad \Dot{\vx'} = \vf_n(\vx'(t), \vpi(\vx'(t))), \quad \vx'_0 = \vx(0), \\\\
    \Sigma_n(\vpi, \vf^*) &= \E\left[\int_{0}^{T} \norm{\vsigma_n(\vx(t), \vpi(\vx(t)))} \,dt.\right] \; \\
    \text{s.t.} &\quad \Dot{\vx} = \vf^*(\vx(t), \vpi(\vx(t))) \quad \vx_0 = \vx(0), \\\\
    \Sigma_n(\vpi, \vf_n) &= \E\left[\int_{0}^{T} \norm{\vsigma_n(\vx'(t), \vpi(\vx'(t)))} \,dt.\right] \; \\
    \text{s.t.} &\quad \Dot{\vx'} = \vf_n(\vx'(t), \vpi(\vx'(t))), \quad \vx'_0 = \vx(0).
    \end{align*}
Furthermore, let $\lambda_n = 
    2 \beta_{n} L_r (1 + L_{\vpi}) T e^{L_{*} (1 + L_{\vpi}) T}$.
    
Then, we have for all $n \geq 0$, $\vpi \in \Pi$ with probability at least $1-\delta$:
\begin{align*}
     |J(\vpi, \vf^*) - J(\vpi, \vf_n)| &\leq \lambda_n \Sigma_n(\vpi, \vf_n) \\
    |J(\vpi, \vf^*) - J(\vpi, \vf_n)| &\leq \lambda_n\Sigma_n(\vpi, \vf^*).
\end{align*}
\label{lemma: Main Lemma sub Gaussian}
\end{lemma}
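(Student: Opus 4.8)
This is a continuous-time simulation (or ``model-error'') lemma: $J(\vpi,\vf^*)$ and $J(\vpi,\vf_n)$ integrate the \emph{same} reward $r$ along the trajectories generated by the \emph{same} policy $\vpi$ from the \emph{same} initial state, but under the two different vector fields $\vf^*$ and $\vf_n$. The plan is to bound the discrepancy in three steps: (i) convert the RKHS closeness of $\vf_n$ and $\vf^*$ into a pointwise bound in terms of $\vsigma_n$; (ii) use a Grönwall argument to bound the distance between the two state trajectories, with that pointwise error as the forcing term; (iii) transfer the trajectory bound to a value bound via $L_r$-Lipschitzness of $r$. The two inequalities in the statement will fall out of the same computation depending on whether, in step (ii), one Lipschitz-subtracts along $\vf^*$ or along $\vf_n$, which decides whether $\vsigma_n$ ends up integrated along the $\vf_n$-rollout (giving $\Sigma_n(\vpi,\vf_n)$) or along the $\vf^*$-rollout (giving $\Sigma_n(\vpi,\vf^*)$). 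For step (i): the preceding theorem gives $\norm{\vf_n-\vmu_n}_{k_n}\le 2\beta_n$, and well-calibration (via \citet{srinivas,chowdhury2017kernelized}) gives $\norm{\vf^*-\vmu_n}_{k_n}\le\beta_n$, so $\norm{\vf_n-\vf^*}_{k_n}\le 3\beta_n$; applying the reproducing-kernel inequality $\abs{g_j(\vz)}\le\norm{g_j}_{k_n}\sigma_{n,j}(\vz)$ to each coordinate $g_j=f_{n,j}-f^*_j$ and summing squares over $j\le d_\vx$ yields $\norm{\vf_n(\vz)-\vf^*(\vz)}\le 3\beta_n\,\norm{\vsigma_n(\vz)}$ for every $\vz\in\setZ$.

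\textbf{Step (ii): Grönwall.} Fix $\vpi$ and let $\vx(\cdot),\vx'(\cdot)$ solve $\dot\vx=\vf^*(\vx,\vpi(\vx))$ and $\dot\vx'=\vf_n(\vx',\vpi(\vx'))$ from the same $\vx_0$; write $\vz(t)=(\vx(t),\vpi(\vx(t)))$, $\vz'(t)=(\vx'(t),\vpi(\vx'(t)))$, $\phi(t)=\norm{\vx(t)-\vx'(t)}$. For the second inequality, add and subtract $\vf_n(\vz(t))$:
\[
\norm{\dot\vx(t)-\dot\vx'(t)}\ \le\ \norm{\vf^*(\vz(t))-\vf_n(\vz(t))}+\norm{\vf_n(\vz(t))-\vf_n(\vz'(t))}\ \le\ 3\beta_n\norm{\vsigma_n(\vz(t))}+L_{*}(1+L_{\vpi})\,\phi(t),
\]
using step (i), the $L_{*}$-Lipschitzness of $\vf_n$, and the $L_{\vpi}$-Lipschitzness of $\vpi$. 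Since $\phi(0)=0$, Grönwall's inequality yields $\phi(t)\le 3\beta_n\, e^{L_{*}(1+L_{\vpi})T}\int_0^T\norm{\vsigma_n(\vz(s))}\,ds$ for all $t\le T$. For the first inequality, instead add and subtract $\vf^*(\vz'(t))$, so that the Lipschitz term uses the $L_f\le L_{*}$-Lipschitzness of $\vf^*$ and the forcing term becomes $3\beta_n\norm{\vsigma_n(\vz'(t))}$ evaluated along the $\vf_n$-rollout; Grönwall then gives the same bound with $\vz$ replaced by $\vz'$.

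\textbf{Step (iii) and conclusion.} Since $r$ is $L_r$-Lipschitz and $\vpi$ is $L_{\vpi}$-Lipschitz,
\[
\abs{J(\vpi,\vf^*)-J(\vpi,\vf_n)}\ \le\ \E\int_0^T L_r\norm{\vz(t)-\vz'(t)}\,dt\ \le\ L_r(1+L_{\vpi})\,\E\int_0^T\phi(t)\,dt .
\]
Plugging in the step-(ii) bound, taking expectations, using $\int_0^T e^{L_{*}(1+L_{\vpi})t}\,dt$ bounded by a multiple of $e^{L_{*}(1+L_{\vpi})T}$, and recognizing $\E\int_0^T\norm{\vsigma_n(\vz(s))}\,ds=\Sigma_n(\vpi,\vf^*)$ (resp.\ $\E\int_0^T\norm{\vsigma_n(\vz'(s))}\,ds=\Sigma_n(\vpi,\vf_n)$ for the other split) gives both inequalities with $\lambda_n$ equal to the expression in the statement, up to the numerical/polynomial prefactor that collects the constants of the three steps.

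\textbf{Main obstacle.} The delicate part is the two-trajectory bookkeeping in step (ii): the uncertainty $\vsigma_n$ must be attached to whichever trajectory appears in the \emph{Lipschitz-subtracted} argument, and the two admissible choices of split are exactly what produce the two stated conclusions, one with $\Sigma_n(\vpi,\vf_n)$ and one with $\Sigma_n(\vpi,\vf^*)$. A second point that must not be glossed over is that the Grönwall exponent must be \emph{independent of $n$}: this is precisely why $\vf_n$ is taken from the QP \cref{eq: Lipschitz function optimization} with the constraint $\norm{\vf_n}_k\le B$, which (for Lipschitz kernels) endows $\vf_n$ with an $n$-independent Lipschitz constant $L_B\le L_{*}$; otherwise the exponential prefactor would grow with $n$ and $\lambda_n$ would not be a legitimate weight. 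The remaining pieces — Grönwall itself, the Lipschitz transfer through $r$, and reading off $\Sigma_n(\vpi,\cdot)$ — are routine.
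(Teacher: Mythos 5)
Your proof is correct and follows essentially the same route as the paper: Lipschitz transfer of the reward difference to the trajectory gap, then a Gr\"onwall bound on $\norm{\vx(t)-\vx'(t)}$ with the pointwise model error as forcing term, where the choice of which vector field to Lipschitz-subtract determines whether the uncertainty is integrated along the $\vf^*$- or the $\vf_n$-rollout. The only difference is that the paper outsources your steps (i)--(ii) to Lemma~4 of \citet{treven_efficient_2023} rather than deriving them, and the resulting constants differ by an inessential factor (your $3\beta_n$ versus the paper's $2\beta_n$, and the factor of $T$, which the paper itself handles inconsistently between the lemma statement and its proof).
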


\begin{proof}
We follow \cite{treven_efficient_2023} and bound the regret with $\Sigma_n(\vpi, \vf_n)$.
\begin{align*}
    |J(\vpi, \vf^*) - J(\vpi, \vf_n)| &= \E\left[\int_{0}^{T} r(\vx(t), \vpi(\vx(t))) - r(\vx'(t), \vpi(\vx'(t))) \,dt\right] \\
    &\leq L_r (1 + L_{\vpi}) \E\left[\int_{0}^{T} \norm{\vx(t) - \vx'(t)} \,dt\right] \\
        &\leq  2 \beta_{n} L_r (1 + L_{\vpi}) T e^{L_\vf (1 + L_{\vpi}) T} \int_{0}^T \norm{\vsigma_{n-1}(\vx(t), \vpi(\vx(t)))} \,dt \tag{\cite{treven_efficient_2023}, Lemma 4}
\end{align*}
\end{proof}

\begin{lemma}
    Let \cref{ass: Lipschitz} and \cref{ass:rkhs_func} hold and consider the simple regret at episode $n$:
    \[
    r_n = J(\vpi^*, \vf^*) - J(\vpi_n, \vf^*).
    \]
    The following holds for all $n > 0$ with probability at least $1-\delta$:
    \begin{equation*}
        r_n \leq  (2\lambda_n + \lambda^2_n)\Sigma_n(\vpi_n, \vf^*).
    \end{equation*}
    \label{lemma: simple regret sub Gaussian}
\end{lemma}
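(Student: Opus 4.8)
The plan is the textbook optimism‑based regret decomposition, carried out at the level of the continuous‑time value and uncertainty functionals $J(\cdot,\cdot)$ and $\Sigma_n(\cdot,\cdot)$ defined in \cref{lemma: Main Lemma sub Gaussian}. Throughout I would work on the high‑probability event on which \cref{lemma: Main Lemma sub Gaussian} holds (probability at least $1-\delta$), so the final bound inherits the same confidence. The key structural fact I would exploit is that, by \cref{eq:optimistic plan sub Gaussian}, the policy $\vpi_n$ is an exact maximizer of $\vpi \mapsto J(\vpi,\vf_n) + \lambda_n \Sigma_n(\vpi,\vf_n)$ over $\Pi$.

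First I would establish optimism: applying the first inequality of \cref{lemma: Main Lemma sub Gaussian} at $\vpi=\vpi^*$ gives $J(\vpi^*,\vf^*) \le J(\vpi^*,\vf_n) + \lambda_n \Sigma_n(\vpi^*,\vf_n)$, and since $\vpi_n$ maximizes that same functional,
\[
J(\vpi^*,\vf^*) \;\le\; J(\vpi^*,\vf_n) + \lambda_n \Sigma_n(\vpi^*,\vf_n) \;\le\; J(\vpi_n,\vf_n) + \lambda_n \Sigma_n(\vpi_n,\vf_n).
\]
Next I would transfer the value term back to the true dynamics using the second inequality of \cref{lemma: Main Lemma sub Gaussian} at $\vpi=\vpi_n$, namely $J(\vpi_n,\vf_n) - J(\vpi_n,\vf^*) \le \lambda_n \Sigma_n(\vpi_n,\vf^*)$. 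Subtracting $J(\vpi_n,\vf^*)$ from the previous display then yields
\[
r_n \;=\; J(\vpi^*,\vf^*) - J(\vpi_n,\vf^*) \;\le\; \lambda_n \Sigma_n(\vpi_n,\vf^*) + \lambda_n \Sigma_n(\vpi_n,\vf_n).
\]

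The remaining step is to replace $\Sigma_n(\vpi_n,\vf_n)$ by $\Sigma_n(\vpi_n,\vf^*)$, for which I would re‑run, verbatim, the estimate in the proof of \cref{lemma: Main Lemma sub Gaussian} with $\norm{\vsigma_n(\cdot)}$ in place of $r(\cdot)$: the reverse triangle inequality together with the $L_{\vsigma}$‑Lipschitz continuity of $\vsigma_n$ (\cref{assumption: Well Calibration Assumption}) and the $L_{\vpi}$‑Lipschitz policy gives $\big|\,\norm{\vsigma_n(\vx'(t),\vpi_n(\vx'(t)))} - \norm{\vsigma_n(\vx(t),\vpi_n(\vx(t)))}\,\big| \le L_{\vsigma}(1+L_{\vpi})\norm{\vx(t)-\vx'(t)}$, and Lemma~4 of \citet{treven_efficient_2023} then bounds $\E\!\int_0^T \norm{\vx(t)-\vx'(t)}\,dt$ by a Grönwall factor times $\Sigma_n(\vpi_n,\vf^*)$. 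Choosing the leading constant in the definition of $\lambda_n$ so that it also dominates $L_{\vsigma}$ (equivalently, using $\max\{L_r,L_{\vsigma}\}$ in place of $L_r$, which only enlarges $\lambda_n$), this gives $\Sigma_n(\vpi_n,\vf_n) \le (1+\lambda_n)\,\Sigma_n(\vpi_n,\vf^*)$. Substituting into the bound above,
\[
r_n \;\le\; \lambda_n \Sigma_n(\vpi_n,\vf^*) + \lambda_n(1+\lambda_n)\,\Sigma_n(\vpi_n,\vf^*) \;=\; (2\lambda_n + \lambda_n^2)\,\Sigma_n(\vpi_n,\vf^*),
\]
which is the claim. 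I expect the only real obstacle to be this last step: the conceptual content is trivial, but one must be careful that the uncertainty‑transfer constant collapses to exactly $\lambda_n$ and not some independent constant built from $L_{\vsigma}$ — a bookkeeping point handled by folding $L_{\vsigma}$ into the constant defining $\lambda_n$ and reusing the trajectory‑comparison estimate already proved for the value functional. Everything else is a direct two‑sided application of \cref{lemma: Main Lemma sub Gaussian} plus the defining optimality of $\vpi_n$.
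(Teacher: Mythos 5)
Your proof is correct and follows essentially the same chain as the paper's: optimism via \cref{lemma: Main Lemma sub Gaussian} at $\vpi^*$, the defining optimality of $\vpi_n$ for $J(\cdot,\vf_n)+\lambda_n\Sigma_n(\cdot,\vf_n)$, and then transferring both the value term and the uncertainty term back to the true dynamics to collect $(2\lambda_n+\lambda_n^2)\Sigma_n(\vpi_n,\vf^*)$. You are in fact more explicit than the paper on the last step --- the paper invokes $\Sigma_n(\vpi_n,\vf_n)-\Sigma_n(\vpi_n,\vf^*)\le\lambda_n\Sigma_n(\vpi_n,\vf^*)$ as if it were covered by \cref{lemma: Main Lemma sub Gaussian}, whereas your re-derivation with $\norm{\vsigma_n(\cdot)}$ in place of $r$ and $L_{\vsigma}$ folded into the constant defining $\lambda_n$ is the honest justification.
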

\begin{proof}[Proof of Lemma~\ref{lemma: simple regret sub Gaussian}]
    \begin{align*}
        r_n &= J(\vpi^*, \vf^*) - J(\vpi_n, \vf^*) \\
        &\le J(\vpi^*, \vf_n) + \lambda_n \Sigma_n(\vpi^*, \vf_n) - J(\vpi_n, \vf^*) \tag{Lemma~\ref{lemma: Main Lemma sub Gaussian}} \\
        &\leq J(\vpi_n, \vf_n) + \lambda_n \Sigma_n(\vpi_n, \vf_n) - J(\vpi_n, \vf^*) \tag{\cref{eq:optimistic plan sub Gaussian}} \\
        &= J(\vpi_n, \vf_n)  - J(\vpi_n, \vf^*)  + \lambda_n \Sigma_n(\vpi_n, \vf_n) \\
        &\leq \lambda_n \Sigma_n(\vpi_n, \vf^*) + \lambda_n \Sigma_n(\vpi_n, \vf_n) \tag{Lemma~\ref{lemma: Main Lemma sub Gaussian}} \\
        &= 2\lambda_n \Sigma_n(\vpi_n, \vf^*) + \lambda_n (\Sigma_n(\vpi_n, \vf_n) - \Sigma_n(\vpi_n, \vf^*)) \\
        &\leq (\lambda^2_n + 2\lambda_n) \Sigma_n(\vpi_n, \vf^*).
    \end{align*}
In the last line, we used the fact that $\Sigma_n(\cdot,\cdot)$ is bounded and positive, and thus behaves like a reward. In fact, it is an intrinsic reward~\citep{sukhija_optimism_2025}, which is why Lemma~\ref{lemma: Main Lemma sub Gaussian} also applies.

    \end{proof}

\begin{reptheorem}{thm: one}[Regret bound in the sub-Gaussian noise case]
Let \cref{ass: Lipschitz} and \cref{ass:rkhs_func} hold. Then we have for all $N > 0$ with probability at least $1-\delta$:
\begin{equation*}
    R_N \leq \setO\left(\setI_{N}^{\sfrac{3}{2}}\sqrt{N}\right).
\end{equation*}
\label{thm: finite horizon regret sub Gaussian}
\end{reptheorem}
\begin{proof}[Proof of \cref{thm: one}]
    \begin{align*}
        R_N &= \sum^N_{n=1} r_n \tag{\cref{eq:regret}}\\
        &\le \sum^N_{n=1}(\lambda^2_n + 2\lambda_n) \Sigma_n(\vpi_n, \vf^*) \tag{Lemma~\ref{lemma: simple regret sub Gaussian}}\\
        &\leq (\lambda^2_N + \lambda_N) \sum^N_{n=1}\Sigma_n(\vpi_n, \vf^*) \\
        &= (\lambda^2_N + 2\lambda_N) \sum^N_{n=1}\E_{\vf^*}\left[\int_{0}^{T} \norm{\vsigma_n(\vx(t), \vpi_n(\vx(t)))}dt\right] \\
        &\leq (\lambda^2_N + 2\lambda_N) \sqrt{NT}\sum^N_{n=1}\E_{\vf^*}\left[\int_{0}^{T} \norm{\vsigma^2_n(\vx(t), \vpi_n(\vx(t)))} \,dt\right] \\
        &\leq (\lambda^2_N + 2\lambda_N) \sqrt{N T \setI_{N}(\vf^*, S)} \tag{\cite{treven_efficient_2023}, Proposition 1}
    \end{align*}
Note that by Lemma~\ref{lemma: Main Lemma sub Gaussian}, $\lambda_N$ scales with the confidence parameter $\beta_N$, which itself depends on the maximum information gain~\citep{chowdhury2017kernelized,rothfuss_hallucinated_2023} and thus scales in the model complexity as $\lambda_N^2=\setO(\setI_N)$, yielding the result.
\end{proof}

\begin{reptheorem}{thm: two}[Sample complexity bound in the unsupervised case]
Let \cref{ass: Lipschitz} and \cref{ass:rkhs_func} hold. Consider \cref{alg:combrl} with extrinsic reward $r = 0$, i.e., 
\begin{align*}
\vpi_n &= \underset{\vpi \in \Pi}{\arg\max}\; \E_{\vpi} \left[ \int^{T-1}_{0}\norm{\vsigma_n(\vx'(t), \vpi(\vx'(t)))} dt\right], \\
  \text{s.t.} &\quad \dot{\vx}'(t) = \vf_n(\vx'(t), \vpi(\vx(t))) \notag.
\end{align*}
Then we have for all $N > 0$, with probability at least $1-\delta$:
\begin{equation*}
\max_{\vpi \in \Pi} \E_{\vf^*}\left[\int_{0}^{T-1} \norm{\vsigma_n(\vx(t), \vpi(\vx(t)))} dt\right] \leq \setO\left(\sqrt{\frac{\setI^{3}_{N}}{N}}\right).
\end{equation*}
\label{thm: intrinsic greedy}
\end{reptheorem}
\begin{proof}[Proof of \cref{thm: two}]
   Recall the definitions from Lemma~\ref{lemma: Main Lemma sub Gaussian}. Let $\Sigma^*_N = \max_{\vpi} \Sigma_N(\vpi, \vf^*)$ and let $\vpi^*_N$ be the corresponding policy. Recall from Lemma~\ref{lemma: simple regret sub Gaussian} that $\Sigma_n(\cdot,\cdot)$ behaves like a reward.
   \begin{align*}
       \Sigma^*_N &\leq \frac{1}{N}\sum^N_{n=1} \Sigma^*_n \tag{monotonicity of the variance} \\
       &\leq \frac{1}{N}\sum^N_{n=1} (1 + \lambda_n) \Sigma_n(\vpi^*_n, \vf_n) \tag{Lemma~\ref{lemma: Main Lemma sub Gaussian}} \\
       &\leq \frac{1}{N}\sum^N_{n=1} (1 + \lambda_n) \Sigma_n(\vpi_n, \vf_n) \tag{$\vpi_n$ is the maximizer for dynamics $\vf_n$} \\
       &\leq \frac{1}{N} \sum^N_{n=1} (1 + \lambda_n)^2 \Sigma_n(\vpi_n, \vf^*) \tag{Lemma~\ref{lemma: Main Lemma sub Gaussian}} \\
       &\leq (1 + \lambda_N)^2\frac{1}{N}\sum^N_{n=1} \Sigma_n(\vpi_n, \vf^*) \\
       &\leq (1 + \lambda_N)^2\frac{\sqrt{N}}{N} \sqrt{\sum^N_{n=1} \Sigma^2_n(\vpi_n, \vf^*)} \tag{Cauchy-Schwarz inequality} \\
       &\leq \setO\left(\sqrt{\frac{\setI^{3}_{N}}{N}}\right) \tag{c.f.~Proof of \cref{thm: one}}
   \end{align*}
\end{proof}

\section{Experimental setup}
\label{sec:experiment_details}

We provide additional details for our experiments in this section.

\subsection{GP experiments}
\label{app:gp}
We evaluate our method on two low-dimensional continuous control tasks: {Pendulum-GP} and {MountainCar-GP}~\citep{moore_efficient_1990}. Unlike in the other, following experiments, these environments are implemented directly by us as continuous-time systems with known physical dynamics given by nonlinear ODEs, rather than relying on Gym or DMC implementations. We emulate a continuous-time setting by using a fine time discretization for state propagation. As for measurements, we assume that we have direct access to the state derivatives.

In \textbf{Pendulum-GP}, the agent must swing up and stabilize a pendulum in the upright position. The state vector is defined as
\[
\vx = \begin{bmatrix} x_0 \\ x_1 \\ x_2 \end{bmatrix} = \begin{bmatrix} \cos \theta \\ \sin \theta \\ \dot{\theta} \end{bmatrix}, \quad \vu = u \in [-2, 2],
\]
where $\theta \in [-\pi, \pi]$ is the pendulum angle and $\dot{\theta}$ is the angular velocity. The underlying nonlinear ODE is:
\[
\frac{d\theta}{dt} = \dot{\theta}, \quad
\frac{d\dot{\theta}}{dt} = \frac{3g}{2\ell} \sin \theta + \frac{3}{m\ell^2} u,
\]
with constants $g = 9.81\, \text{m/s}^2$, $m = 1$, and $\ell = 1$.
We use a \emph{Gym-style} reward, which penalizes deviations from the target angle $\theta = 0$, angular velocity $\dot{\theta}$, and control input $u$:
\[
r(\vx, \vu) = -\theta^2 - 0.1\,\dot{\theta}^2 - 0.02\,u^2,
\]
where $\theta = \arctan2(x_1, x_0)$ and $\dot{\theta} = x_2$.

\pagebreak
In \textbf{MountainCar-GP}, the agent must build momentum to propel a car up a steep hill. 
The state vector is defined as
\[
\vx = \begin{bmatrix}  x_1 \\  x_2 \end{bmatrix}, \quad \vu = u \in [-1, 1],
\]
where $ x_1 \in [-1.2, 0.6]$ is the position and $ x_2 \in [-0.07, 0.07]$ is the velocity. The underlying nonlinear ODE is given by:
\[
\frac{dx_1}{dt} = x_2, \quad
\frac{dx_2}{dt} = 0.0015\cdot u - 0.0025 \cos(3\cdot x_1).
\]
Position and velocity are clipped to their bounds, and backward motion is blocked at $ x_1 = -1.2$ if $ x_2 < 0$.The reward includes a terminal bonus of +100 for reaching the goal and penalizes control effort:
\[
r(\vx, \vu) = -0.1\,u^2 + 100 \cdot \mathbf{1}_{\text{goal reached}},
\]
where the goal is reached if the car's position exceeds $0.45$ and its velocity is non-negative.

For our GP experiments in \cref{fig:gps}, we use the RBF kernel. The kernel parameters are updated online using maximum likelihood estimation~\citep{rasmussen_gaussian_2005}. We use a hand-tuned static regime for the internal reward weight, i.e.,~$\lambda_n=\lambda$. For the OCORL baseline, we provide the confidence level function $\beta_n(\delta)=\beta$. The hyperparameters for the statistical model as well as for the environments and algorithms are given in \cref{tab:gps}.

\begin{table}[ht]
\centering
\begin{adjustbox}{max width=\linewidth}
\begin{threeparttable}
    \caption{Model training hyperparameters and experimental setup for the GP-based experiments in~\cref{fig:gps}.}
    \label{tab:gps}
\begin{tabular}{l|ccc|l|cc|c}
\toprule
{Environment} & $T$ [s] & $N$ & $\nu$ [s$^{-1}$] & Algorithm & $\lambda$ & $\beta$ & Learning Rate \\
\midrule
\multirow{4}{*}{Pendulum-GP}
  & \multirow{4}{*}{2.5} & \multirow{4}{*}{12} & \multirow{4}{*}{20}
  & \combrl\  & 1.0 & –   & \multirow{4}{*}{0.01}  \\
  &           &            &           
  & OCORL     & 0   & 7.5  \\
  &           &            &           
  & PETS      & 0   & – \\
  &           &            &           
  & Mean      & 0   & –   \\
\midrule
\multirow{4}{*}{MountainCar-GP}
  & \multirow{4}{*}{200} & \multirow{4}{*}{15} & \multirow{4}{*}{1}
  & \combrl\  & $10^6$& –  & \multirow{4}{*}{0.01}   \\
  &           &            &           
  & OCORL     & 0 & 30   \\
  &           &            &           
  & PETS      & 0   & –   \\
  &           &            &           
  & Mean      & 0  & –     \\
\bottomrule
\end{tabular}
\begin{tablenotes}
\footnotesize
\item Episode horizon $T$, number of episodes $N$, and measurement/control frequency $\nu$ are shared across algorithms for each environment.
\end{tablenotes}
\end{threeparttable}
\end{adjustbox}
\end{table}

For planning, we use the iCEM optimizer~\citep{pinneri_icem_2020} even though it is a discrete-time algorithm. The method iteratively samples action sequences, evaluates them in the dynamics model, and refits a Gaussian distribution to the top-performing (“elite”) trajectories. This elite set determines the parameters for the next iteration. 

In contrast to vanilla CEM introduced by \citet{Rubinstein1999}, iCEM improves efficiency by sampling temporally correlated action sequences and reusing elite trajectories across iterations and MPC steps. We emulate the continuous-time setting by using a fine time discretization (see measurement/control frequency $\nu$ in \cref{tab:gps}) and using the equidistant MSS. The hyperparameters for the planning are given in \cref{tab:icem_gps}. 

\begin{table}[ht]
\centering
\begin{adjustbox}{max width=\linewidth}
\begin{threeparttable}
    \caption{iCEM hyperparameters used for planning in the GP-based experiments.}
    \label{tab:icem_gps}
\begin{tabular}{l|ccccccc}
\toprule
Environment     & Horizon & \# Particles & \# Samples & \# Elites & Steps & $\alpha$ & Exponent \\
\midrule
Pendulum-GP     & 30      & 10           & 500        & 50        & 10    & 0.2      & 2        \\
MountainCar-GP  & 100     & 10           & 500        & 50        & 5     & 0.2      & 2        \\
\bottomrule
\end{tabular}
\begin{tablenotes}
\footnotesize
\item \emph{Horizon} refers to the iCEM planning horizon (in time steps). \emph{Steps} indicates how many CEM optimization iterations are performed per control decision to refine the action distribution using elite samples. The number of time steps and control decisions are given by the measurement/control frequency $\nu$.
\end{tablenotes}
\end{threeparttable}
\end{adjustbox}
\end{table}

\subsection{Computational costs}
\label{app:compute times}
We give the computational costs for our GP experiments in \cref{tab:compute cost}. This shows that the co-optimization over the reward and the optimistic dynamics as well as the reparametrization trick used by the OCORL algorithm are computationally prohibitive, and require around $3\times$ the compute time compared to \combrl.

\begin{adjustbox}{max width=\linewidth}
\begin{threeparttable}
\centering
    \caption{Computation cost comparison (training time) for each algorithm across environments.}
    \label{tab:compute cost}
\begin{tabular}{l|cccc}
\toprule
Environment & \combrl\! & OCORL & Mean & PETS\\
\midrule
Pendulum-GP\,\tnote{a}       & 10.8 ± 0.13 min  & 30.6 ± 0.4 min  & 10.4 ± 0.04 min  & 30.78 ± 0.27 min \\
MountainCar-GP\,\tnote{b}     & 1.29 ± 0.01 h & 4.55 ± 0.1 h  & 1.28 ± 0.03 h  & 4.67 ± 0.16 h \\
\bottomrule
\end{tabular}
\begin{tablenotes}
\footnotesize
\item[a] Mean total training time. GPU: NVIDIA GeForce RTX 2080 Ti.
\item[b] Mean training time per episode. GPU: NVIDIA GeForce RTX 2080 Ti.
\end{tablenotes}
\end{threeparttable}
\end{adjustbox}

\subsection{Auto-tuning experiments} 
\label{ss:maxinfo}
In \cref{fig:maxinfo}, we auto-tune the intrinsic reward weight $\lambda_n$ following the method proposed by \citet{sukhija2024maxinforl}, who demonstrate that this approach is effective across a range of model-free off-policy RL methods in discrete time. Specifically, we adjust $\lambda$ by minimizing the loss:
\begin{equation}
L(\lambda) = \underset{\vx \sim \setD_{1:n}, \vu \sim \vpi_n, \bar{\vu} \sim \bar{\vpi}_n}{\E}\log(\lambda) (\vsigma_n(\vx, \vu) - \vsigma_n(\vx, \bar{\vu})).
\label{eq: lambda opt}
\end{equation}
Here, $\bar{\vpi}_n$ denotes a slowly updated target policy obtained via Polyak averaging of $\vpi_n$. The objective promotes larger $\lambda$ values when the current policy explores less than the target policy. \cref{fig:lambda_decay} shows that the auto-tuned intrinsic reward weight $\lambda_n$ typically decreases over training for several of the tasks shown in \cref{fig:maxinfo}, indicating that exploration is emphasized early on and gradually reduced as the model and policy stabilize.

\begin{figure}
    \centering
    \includegraphics[width=.9\linewidth]{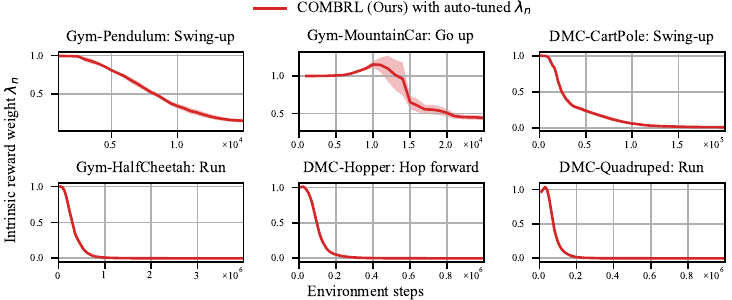}
    \caption{\emph{Evolution of the auto-tuned intrinsic reward weight $\lambda_n$.} We plot $\lambda_n$ as a function of environment steps for COMBRL with the auto-tuning objective in \cref{eq: lambda opt} across several Gym/DMC tasks, as seen in \cref{fig:maxinfo}. Overall, $\lambda_n$ tends to decrease over time, indicating stronger exploration early in training and reduced weighting on the intrinsic reward as learning progresses.}
    \label{fig:lambda_decay}
\end{figure} 

\subsection{BNN experiments} 
\label{app:bnn_exp}
In our experiments that do not explicitly use Gaussian Processes, we train an ensemble of 5 neural networks to model forward dynamics. Model epistemic uncertainty is estimated via the disagreement among the ensemble members~\citep{pathak_selfsupervised_2019}.
To further leverage the model, we augment the data by including synthetic transitions. For each policy update, we sample real transitions $(\vx, \vu, \dot{\vy})$ from the replay buffer $\setD_{1:n}$ and add corresponding model-predicted transitions $(\vx, \vu, \dot{\vy}')$, where $\dot{\vy}'$ is generated by the mean model $\vmu_n$. This lets us blend real and synthetic rollouts, similar to the strategy used by \citet{janner2019trust}, thereby increasing the update-to-data (UTD) internal ratio. 

The policy is optimized using soft actor-critic~\citep[SAC,][]{haarnoja_sac_2018}, an off-policy actor-critic method that learns a stochastic policy by maximizing expected return together with an entropy bonus to encourage exploration and stabilize training. 
We adopt the same hyperparameters as \citet{sukhija2024maxinforl} for optimizing the loss via stochastic gradient descent in \cref{eq: lambda opt} and for configuring the UTD. We also periodically perform soft resets for the policy for training stability~\citep{oro_sample_2023}. The hyperparameters for the statistical model and SAC are given in~\cref{tab:sac}. For the ensemble-based experiments, we use several environments from the Gym and DMC benchmark suites~\citep{brockman_gym_2016, google_dmc_2020}. We adapt them to the continuous-time setting by approximating the derivatives using a finite difference filter. The measurement/control frequency is given by the duration of an environment step \texttt{dt}.

\begin{table}[ht]
\centering
\caption{Hyperparameters for ensemble-based experiments with SAC in~\Cref{sec:experiments}.}
\label{tab:sac}
\begin{adjustbox}{max width=\linewidth}
\begin{threeparttable}
\begin{tabular}{l|ccccc}
\toprule
{Environments} &
\rot{Action Repeat} &
\rottext{Policy / Critic\\Architecture} &
\rottext{Model\\Architecture} &
\rottext{Learning Rate} &
\rottext{Batch Size} \\
\midrule
Gym – Pendulum / MountainCar & 1 & (256,256) & $5\times$(256,256) & $3 \times 10^{-4}$ & 256 \\
Gym – Reacher\qquad & 2 & (256,256) & $5\times$(512,512) & $3 \times 10^{-4}$ & 256 \\
Gym – other environments\tnote{a} & 2 & (256,256) & $5\times$(256,256) & $3 \times 10^{-4}$ & 256 \\
DMC – Quadruped\qquad & 2 & (256,256) & $5\times$(512,512) & $3 \times 10^{-4}$ & 256 \\
DMC – Humanoid     & 2 & (512,512) & $5\times$(512,512) & $3 \times 10^{-4}$ & 256 \\
DMC – other environments\tnote{b}     & 2 & (256,256) & $5\times$(256,256) & $3 \times 10^{-4}$ & 256 \\
\bottomrule
\end{tabular}
\begin{tablenotes}
\footnotesize
\item [a] HalfCheetah, Hopper, Pusher.
\item [b] Hopper, CartPole.
\end{tablenotes}
\end{threeparttable}
\end{adjustbox}
\end{table}

\subsection{Downstream tasks} 
\label{app:downstream_tasks}

To evaluate zero-shot generalization in Figures \ref{fig:downstream} and \ref{fig:ablation}, we introduce custom downstream tasks that differ semantically from the primary training objective. While the primary task corresponds to the default reward in each Gym or DMC environment, the downstream task uses a custom reward function that incentivizes behaviour that contrasts with the original task (e.g., moving away instead of toward a goal). 

We implement each downstream task by overriding the reward computation in the Gym or DMC environment using environment wrappers. \Cref{tab:downstream-tasks} summarizes the evaluated primary and downstream tasks. It also gives the internal reward parameter $\lambda_n$ for the experiments shown in \cref{fig:downstream}. For said experiments, the downstream rewards are defined as follows:

\begin{itemize}
    \item \textbf{MountainCar} – go up left: Encourages the car to reach the leftmost side of the hill, in contrast to the standard goal on the right.
    \item \textbf{HalfCheetah} – run backwards: Reverses the locomotion objective by rewarding backward velocity.
    \item \textbf{Hopper} – hop backwards: Rewards hopping backwards while maintaining a healthy posture.\footnote{The Hopper environment in Gym introduces a \texttt{healthy\_reward}, which is preserved for the downstream task.}
    \item \textbf{Pendulum}
    \begin{itemize}
        \item[–] Balance upright: Starts upright and rewards maintaining the upright position.
        \item[–] Swing up: Starts with the pendulum pointing downward and rewards swinging it up.
        \item[–] Swing down: Starts upright and rewards swinging the pendulum downward.
        \item[–] Keep down: Starts downward and rewards staying down.
    \end{itemize}
    \item \textbf{Reacher} – go away: Penalizes proximity to the goal, inverting the standard reaching task.
    \item \textbf{Pusher} – push away from target: Encourages the agent to push the object away from the goal location.
\end{itemize}

\begin{table}
\centering
\caption{Primary and downstream tasks used in our evaluation. Each downstream task is defined via a custom reward that encourages behaviour contrasting the primary task. We also provide the algorithm hyperparameters used in~\cref{fig:downstream}, namely the strategy used and the optimized internal reward weight $\lambda_n$.}
\label{tab:downstream-tasks}
\begin{adjustbox}{max width=\linewidth}
\begin{threeparttable}
\begin{tabular}{l|cc|cc}
\toprule
{Environment} & {Primary Task} & {Downstream Task} & Strategy\tnote{a} & $\lambda$ \\
\midrule
MountainCar         & Go up right     & Go up left & Annealing & 50 \\
HalfCheetah         & Run forward     & Run backward & Annealing & 2 \\
Hopper              & Hop forward     & Hop backward & Static & 10 \\
Pendulum            & Balance upright & Swing down & Annealing & 10 \\
Pendulum            & Keep down & Swing up & Annealing & 50 \\
Reacher             & Reach target    & Keep away & Static & 0.17 \\
Pusher              & Push to target  & Push away & Static & 0.56 \\
\bottomrule
\end{tabular}
\begin{tablenotes}
\footnotesize
\item [a] Static denotes a fixed internal reward weight $\lambda_n=\lambda$, annealing a decreasing reward weight $\lambda_n\propto\lambda\cdot(1-n/N)$.
\end{tablenotes}
\end{threeparttable}
\end{adjustbox}
\end{table}

\newpage
\subsection{Epistemic uncertainty decrease}
\label{app:epistemic_decrease}

We visualize how \combrl\ reduces epistemic uncertainty over time in both the unsupervised and reward-driven regimes in \Cref{fig:epistemic_uncertainty}.  

In the \textbf{unsupervised setting}, we run \combrl\ with a purely intrinsic objective (i.e., $\lambda_n \to \infty$) on the Pendulum environment and monitor epistemic uncertainty over the \emph{entire} reachable state-action space. Concretely, at the end of each episode $n$ we sample 100 \textit{random }state-action pairs $\vz = (\vx,\vu)$ from $\setX \times \setU$ and evaluate the model uncertainty $\|\vsigma_n(\vz)\|$ at these points. This shows that unsupervised exploration alone is sufficient to drive down epistemic uncertainty globally over $\setX \times \setU$, rather than only along the visited trajectories.

In the \textbf{reward-driven setting with auto-tuned $\lambda_n$}, we consider the CartPole environment and use the method from Appendix~\ref{ss:maxinfo} to auto-tune $\lambda_n$. Here we track the epistemic uncertainty encountered \emph{along the executed trajectories} at each environment step. Compared to the Mean-planning and PETS baselines, \combrl\ consistently visits regions with higher epistemic uncertainty, indicating that the intrinsic reward term steers the agent towards informative parts of the state-action space, while the overall level of uncertainty still decreases over time.

\begin{figure}
    \centering
    \includegraphics[width=\linewidth]{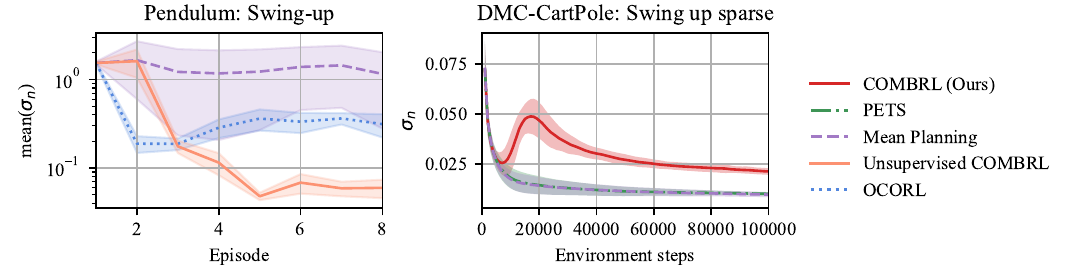}
    \caption{\textit{Epistemic uncertainty under \combrl.} \textbf{Left:} \textit{Unsupervised \combrl\ ($\lambda_n \to \infty$) on Pendulum.} At the end of each episode, we evaluate the model uncertainty $\|\vsigma_n(z)\|$ on 100 random samples $z \in \setX \times \setU$ and track its evolution, showing that unsupervised exploration systematically reduces epistemic uncertainty across the entire reachable state-action space. \textbf{Right:} \textit{\combrl\ with auto-tuned intrinsic reward weight $\lambda_n$ on DMC-CartPole.} We plot the epistemic uncertainty encountered along the executed trajectories at each environment step. Compared to Mean and PETS, \combrl\ is guided towards regions of higher epistemic uncertainty while still driving uncertainty down over time, indicating that $\lambda_n$ effectively shapes exploration towards informative state-action regions.}
    \label{fig:epistemic_uncertainty}
\end{figure}

\pagebreak

\subsection{Time-adaptive experiments}
\label{app:tacos}
The time-adaptive TaCoS framework~\citep{treven_wtssc_2024} generalizes continuous-time RL to settings where sensing and control actions are costly. Instead of interacting with the system at a fixed frequency, the agent actively chooses \emph{when} to sense and apply control inputs, thus adapting the sampling rate over time.  

Formally, the dynamics are described by the unknown flow $\mPhi^*$, which maps a state-action-time triple to the successor state and the integrated reward:
\[
\mPhi^*(\vx, \vu, t) = (\vx', b),
\]
where $\vx'$ is the next state at time $t+\Delta t$ and
\[
b = \int_{t}^{t+\Delta t} r(\vx(s), \vu(s))\, ds
\]
is the \emph{transition reward}, i.e., the cumulative task reward over the chosen interval $\Delta t$. The length of $\Delta t$ reflects the measurement and control schedule $S$, meaning that fewer but longer intervals correspond to less frequent interaction.  

Crucially, in TaCoS each sensing or control action incurs a unit \emph{interaction cost} $C$. The agent must therefore balance two objectives: maximizing cumulative task reward while minimizing the number of costly interactions. This induces a joint optimization problem over both the control policy and the measurement schedule. Transition rewards provide a natural way to capture variable step lengths, allowing the planner to evaluate policies under adaptive sampling schemes.  

In the model-based RL setting, we maintain a statistical model $\setM_n$ of the flow $\mPhi^*$. For the mean and PETS baselines (Mean-TaCoS and PETS-TaCoS), planning is carried out with respect to the task reward only. In contrast, OCORL-TaCoS applies optimism to the dynamics: at episode $n$, the policy $\vpi_n$ is selected by maximizing the optimistic value over all plausible models in $\setM_{n-1}$.  

In this framework, \combrl applies seamlessly. The statistical model $\setM_n$ naturally extends to modeling flows with transition rewards, and the intrinsic reward shaping in \combrl integrates epistemic uncertainty with extrinsic task rewards. As a result, \combrl-TaCoS combines optimism-driven exploration with adaptive measurement scheduling, reducing unnecessary interactions while preserving sample efficiency in the adaptive regime.

For the time-adaptive experiments, we evaluate \combrl on two continuous-time environments. We first reuse the \textbf{Pendulum-GP} environment described in the GP experiments (see Appendix~\ref{app:gp}), with an added interaction cost $C$. 

In addition, we include the more challenging \textbf{RC Car} environment, using the implementation of \citet{treven_wtssc_2024}. The continuous-time dynamics are integrated with a small base step size ($\texttt{dt} = 1/30$s by default). The reward function combines a tolerance-based state term (encouraging the car to reach a goal pose) with penalties on control effort and smoothness.

The RC Car follows a nonlinear bicycle model with blended kinematic and dynamic components.  
The state is
\[
\vx = \begin{bmatrix} p_x, p_y, \theta, v_x, v_y, \omega \end{bmatrix}^\top, 
\quad \vu = \begin{bmatrix} \delta \\ d \end{bmatrix},
\]
where $(p_x,p_y)$ is the position, $\theta$ the heading angle, $(v_x,v_y)$ the local velocities, and $\omega$ the yaw rate.  
The control inputs are steering angle $\delta$ and throttle $d$.  

The kinematic model is
\[
\dot{p}_x = v_x \cos \theta - v_y \sin \theta, \quad
\dot{p}_y = v_x \sin \theta + v_y \cos \theta, \quad
\dot{\theta} = \omega.
\]

The dynamic accelerations are given by
\[
\begin{aligned}
\dot{v}_x &= \tfrac{1}{m} \Big( f_{r,x} - f_{f,y}\sin \delta + m v_y \omega \Big), \\
\dot{v}_y &= \tfrac{1}{m} \Big( f_{r,y} + f_{f,y}\cos \delta - m v_x \omega \Big), \\
\dot{\omega} &= \tfrac{1}{I} \Big( f_{f,y} l_f \cos \delta - f_{r,y} l_r \Big),
\end{aligned}
\]
where $m$ is the car mass, $I$ its moment of inertia, $l_f,l_r$ the distances to the front/rear axles, and $f_{f,y}, f_{r,y}, f_{r,x}$ are the tire forces determined by the nonlinear Pacejka tire model.  

The blended model used in practice interpolates between the kinematic and dynamic models depending on the velocity regime (see the implementation by \citealt{treven_wtssc_2024} for details).

We give the relevant parameters for the time-adaptive experiments in \cref{tab:tacos}. In \cref{fig:tacos_bar}, we show the final performance at convergence of the evaluated algorithms. For completeness, we also offer the complete learning curves in \cref{fig:tacos_learning}.

\begin{figure}[ht]
    \centering
    \includegraphics[width=\linewidth]{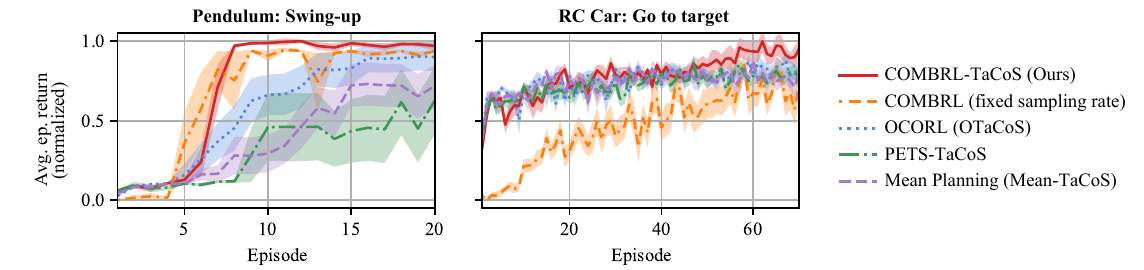}
\caption{Learning curves of \combrl-TaCoS compared to OTaCoS, Mean-TaCoS, PETS-TaCoS, and COMBRL with fixed control rate (equidistant MSS). Results are averaged over 10 random seeds, with mean and standard error shown. \combrl-TaCoS achieves competitive or superior returns while requiring fewer interactions than its fixed-rate variant, and matches or exceeds the performance of OTaCoS.}
    \label{fig:tacos_learning}
\end{figure}

\begin{table}[ht]
\centering
\begin{adjustbox}{max width=\linewidth}
\begin{threeparttable}
    \caption{Experimental setup with environment details and hyperparameters for the TaCoS experiments.}
    \label{tab:tacos}
\begin{tabular}{l|cc}
\toprule
 & {Pendulum} & {RC Car} \\
\midrule
Episode horizon $T$ [s]  & 10  & 3  \\
Number of episodes $N$   & 20   & 70   \\
Base step size $\Delta t$ [s] & 0.05 & 1/30 \\
Max steps per episode    & 200   & 100  \\
Min steps per episode    & 40   & 20  \\
Interaction cost $C$ & $0.1$ & $0.4$ \\
Confidence level $\beta$ & $2$ & $2$ \\
Internal reward weight\tnote{a} \ $\lambda \qquad$ & $1$ & $10$ \\
Model architecture & $5\times$(64,64,64) & $5\times$(64,64,64) \\
Policy hidden layers & (64,64) & (64,64) \\
Learning rate & $3 \times 10^{-4}$ & $3 \times 10^{-4}$ \\
Batch size & 256 & 256 \\
\bottomrule
\end{tabular}
\begin{tablenotes}
\footnotesize
\item [a] We use an annealing reward weight schedule, i.e., $\lambda_n\propto\lambda\cdot(1-n/N)$.
\end{tablenotes}
\end{threeparttable}
\end{adjustbox}
\end{table}

\stopcontents[sections]

\end{document}